\documentclass[11pt, copyright, logo, secondlogo]{template/lark}
\usepackage[authoryear, sort&compress, round]{natbib}
\usepackage{bbm}
\usepackage{enumitem}
\setlist{nosep}
\usepackage{xspace}
\usepackage{graphicx}
\usepackage{tabularx}
\usepackage{booktabs}
\usepackage{subcaption}
\usepackage{xcolor}
\usepackage{hyperref}
\usepackage{verbatim}
\usepackage{float}
\usepackage{cleveref}
\usepackage{wrapfig}
\usepackage{makecell}
\usepackage{multirow}
\usepackage{fancyvrb}
\usepackage[normalem]{ulem}
\usepackage{url}
\usepackage{tcolorbox}
\usepackage{listings}
\tcbuselibrary{listings,skins,breakable}

\usepackage{hyperref}       
\usepackage{url}            
\usepackage{booktabs}       
\usepackage{amsfonts}       
\usepackage{nicefrac}       
\usepackage{microtype}      
\usepackage{xcolor}         
\usepackage{microtype}
\usepackage{graphicx}
\usepackage{subcaption}
\usepackage[table]{xcolor}
\usepackage{booktabs} 
\usepackage[dvipsnames]{xcolor}
\usepackage[dvipsnames*,svgnames]{xcolor}
\usepackage{amsmath}
\usepackage{amssymb}
\usepackage{mathtools}
\usepackage{adjustbox}
\usepackage{amsthm}
\usepackage{hyperref}
\usepackage{multirow}
\usepackage{algorithm}
\usepackage{algorithmic}

\theoremstyle{plain}
\newtheorem{theorem}{Theorem}[section]

\newtheorem{lemma}[theorem]{Lemma}

\theoremstyle{definition}
\newtheorem{definition}[theorem]{Definition}
\newtheorem{assumption}[theorem]{Assumption}
\theoremstyle{remark}

\usepackage{amsmath,amsfonts,bm}

\def\eqref#1{equation~\ref{#1}}

\def\1{\bm{1}}

\def\vzero{{\bm{0}}}

\def\vtheta{{\bm{\theta}}}

\def\vh{{\bm{h}}}

\def\vk{{\bm{k}}}

\def\vo{{\bm{o}}}
\def\vp{{\bm{p}}}
\def\vq{{\bm{q}}}

\def\vv{{\bm{v}}}

\def\vy{{\bm{y}}}
\def\vz{{\bm{z}}}

\def\mG{{\bm{G}}}

\def\mI{{\bm{I}}}
\def\mJ{{\bm{J}}}

\def\mR{{\bm{R}}}
\def\mS{{\bm{S}}}

\def\mW{{\bm{W}}}
\def\mX{{\bm{X}}}

\DeclareMathAlphabet{\mathsfit}{\encodingdefault}{\sfdefault}{m}{sl}
\SetMathAlphabet{\mathsfit}{bold}{\encodingdefault}{\sfdefault}{bx}{n}

\def\gZ{{\mathcal{Z}}}

\newcommand{\E}{\mathbb{E}}
\newcommand{\Ls}{\mathcal{L}}
\newcommand{\R}{\mathbb{R}}

\newcommand{\method}{\textsc{QK-Restore }}

\newcommand{\RR}{\mathbb{R}}

\newcommand{\Lattn}{{L}_{\mathrm{attn}}}

\newcommand{\Ltwopi}{L^{2}(\pi)}
\newcommand{\norm}[1]{\left\lVert#1\right\rVert}

\newcommand{\bvbar}{\bar{\vv}}

\usepackage{times}
\usepackage{latexsym}

\usepackage[T1]{fontenc}

\usepackage[utf8]{inputenc}

\usepackage{microtype}

\usepackage{inconsolata}

\usepackage{graphicx}

\title{Attention Amnesia in Hybrid LLMs: When CoT Fine-Tuning Breaks Long-Range Recall, and How to Fix It}

\author[1]{Xinyu Zhou$^{*}$}
\author[2]{Boyu Zhu$^{*}$}
\author[3]{Yi Xu}
\author[1]{Zhiwei Li}
\author[4]{Yingfa Chen}
\author[5]{Huiming Wang$^{\dag}$}
\author[1,6]{Zhijiang Guo$^{\dag}$}

\affil[1]{LARK, HKUST(GZ)}
\affil[2]{UCL}
\affil[3]{Mistral AI}
\affil[4]{Tsinghua University}
\affil[5]{SUTD}
\affil[6]{HKUST}

\correspondingauthor{Huiming Wang (huiming\_wang@mymail.sutd.edu.sg), Zhijiang Guo (zhijiangguo@hkust-gz.edu.cn)}
\githubpage{https://github.com/LARK-AI-Lab/QK-Restore}

\begin{abstract}
Chain-of-thought (CoT) supervised fine-tuning (SFT) is widely adopted to improve reasoning ability, yet we find that it systematically degrades long-context recall in hybrid linear-attention models. Across architectures including HypeNet and Jet-Nemotron, retrieval performance on Needle-In-A-Haystack (NIAH) deteriorates substantially after CoT-SFT, and the degradation becomes more severe under harder retrieval settings and longer context windows. For example, HypeNet-9B on NIAH-\texttt{S2@256K} decreases from $67.2\%$ to $9.4\%$. We attribute this to CoT-SFT biasing attention gradients toward short-range patterns, disrupting query-key projections ($\mW_Q,\mW_K$) that are responsible for long-range routing. Motivated by this observation, we propose \textsc{QK-Restore}, a training-free method that restores only $\mW_Q$ and $\mW_K$ from the pre-SFT checkpoint while preserving all other post-SFT parameters. We further introduce a Procrustes variant to balance routing preservation and reasoning adaptation. Across architectures, \textsc{QK-Restore} consistently restores long-context capability at zero training cost while preserving reasoning performance; for instance, on HypeNet-5B it improves \texttt{S3@256K} from $65.4\%$ to $76.4\%$ while maintaining strong reasoning performance.
\end{abstract}

\begin{document}
\maketitle

\begin{abstract}
Chain-of-thought (CoT) supervised fine-tuning (SFT) is widely adopted to improve reasoning ability, yet we find that it systematically degrades long-context recall in hybrid linear-attention models. Across architectures including HypeNet and Jet-Nemotron, retrieval performance on Needle-In-A-Haystack (NIAH) deteriorates substantially after CoT-SFT, and the degradation becomes more severe under harder retrieval settings and longer context windows. For example, HypeNet-9B on NIAH-\texttt{S2@256K} decreases from $67.2\%$ to $9.4\%$. We attribute this to CoT-SFT biasing attention gradients toward short-range patterns, disrupting query-key projections ($\mW_Q,\mW_K$) that are responsible for long-range routing. Motivated by this observation, we propose \textsc{QK-Restore}, a training-free method that restores only $\mW_Q$ and $\mW_K$ from the pre-SFT checkpoint while preserving all other post-SFT parameters. We further introduce a Procrustes variant to balance routing preservation and reasoning adaptation. Across architectures, \textsc{QK-Restore} consistently restores long-context capability at zero training cost while preserving reasoning performance; for instance, on HypeNet-5B it improves \texttt{S3@256K} from $65.4\%$ to $76.4\%$ while maintaining strong reasoning performance.
\end{abstract}

\section{Introduction}

Efficient sequence models, such as linear attention \citep{katharopoulos2020transformersrnnsfastautoregressive,qin2024variouslengthsconstantspeed,yang2024gatedlinearattentiontransformers} and state-space models \citep{gu2024mambalineartimesequencemodeling,dao2024transformersssmsgeneralizedmodels}, have emerged as attractive alternatives to softmax-attention \citep{vaswani2023attentionneed} for long-context processing, reducing the quadratic cost to linear complexity through compact recurrent or structured states. However, this compression introduces an information bottleneck on recall-intensive tasks that are crucial for long-context understanding \citep{zhang2024hedgehogporcupineexpressive}, such as Needle-in-a-Haystack (NIAH; \citealt{hsieh2024ruler}). Hybrid linear-attention models mitigate this tradeoff by retaining a small set of softmax-attention layers for global recall while converting the remaining layers into efficient linear-attention layers \citep{nvidia2025nvidianemotron3efficient}. Although native hybrid models such as Qwen3.5 \citep{qwen35blog} and Kimi Linear \citep{kimiteam2025kimilinearexpressiveefficient} achieve strong performance and efficiency, pretraining them from scratch remains highly resource-intensive. This cost motivates recent work that converts pretrained softmax-attention into hybrid linear-attention models through distillation, using only a small fraction of pretraining tokens \citep{chen2026hybridlinearattentionright,li2025distilling,hoshino2025radredundancyawaredistillationhybrid}. Existing conversion work primarily focuses on preserving pretraining capabilities during architectural conversion, while the impact of downstream reasoning post-training on long-context retrieval remains underexplored.

\begin{figure}[!t]
    \centering
    \includegraphics[width=0.6\linewidth]{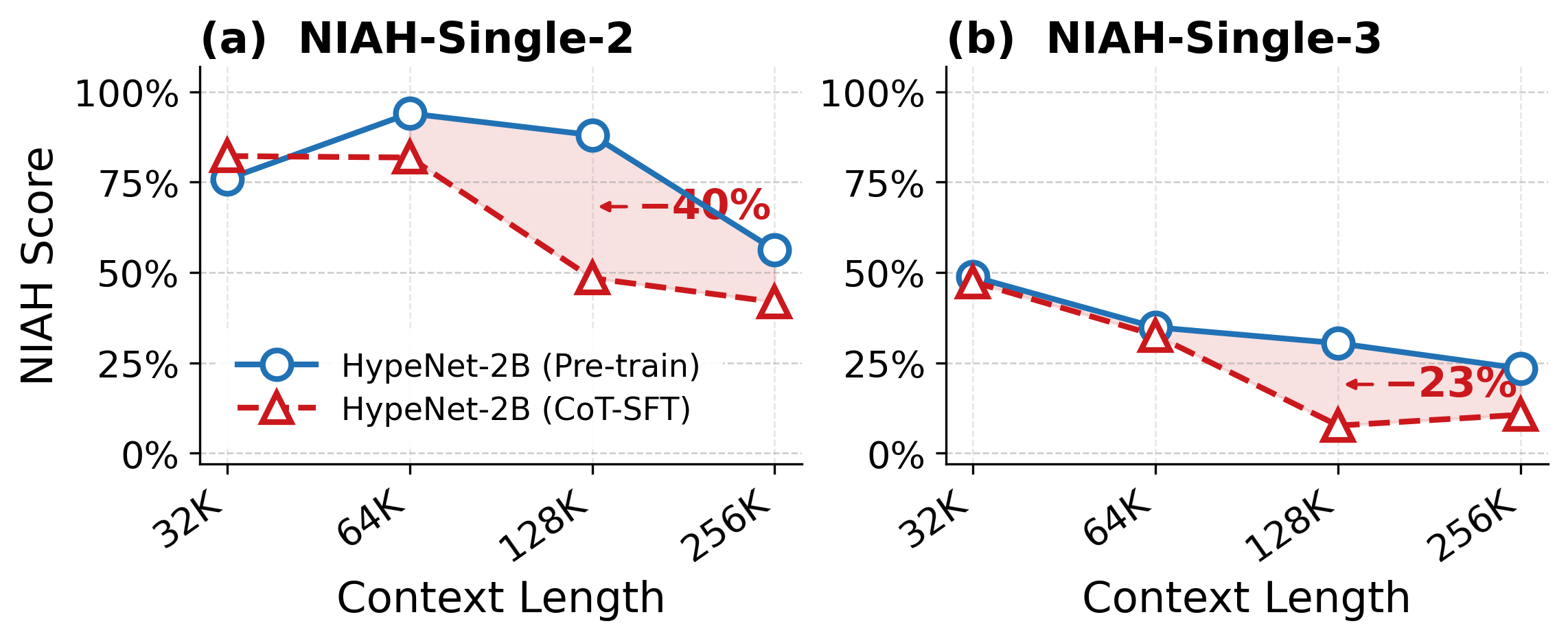}
    \caption{NIAH retrieval accuracy before and after CoT-SFT on HypeNet-2B.
    Despite improving reasoning ability, CoT-SFT substantially degrades long-context retrieval accuracy in distilled hybrid models, especially at longer context lengths and harder NIAH settings.
    }
    \label{fig:niah-sft-degradation}
    \vspace{-2.5mm}
\end{figure}

While distilled hybrid models primarily serve as efficient base models, modern LLMs typically rely on post-training to acquire stronger instruction-following and reasoning abilities \citep{Guo_2025,qwen2025qwen25technicalreport,ouyang2022traininglanguagemodelsfollow}. In particular, CoT-SFT is widely used to enhance mathematical and multi-step reasoning\citep{wei2023chainofthoughtpromptingelicitsreasoning, li2025tl}. We observe that applying CoT-SFT on mathematical reasoning data improves reasoning performance, but can substantially degrade the long-context recall ability that distilled hybrid models acquire during pretraining or architectural conversion, particularly in more challenging NIAH settings and extended contexts length (\autoref{fig:niah-sft-degradation}). Unlike generic catastrophic forgetting, this degradation is highly structured: it primarily affects long-range retrieval behavior mediated by the retained softmax-attention layers, while leaving the intended reasoning gains largely intact. This reveals a fundamental tension in hybrid models: CoT-SFT strengthens local reasoning can simultaneously disrupt the routing mechanisms required for long-range retrieval.

To investigate this phenomenon, we model CoT reasoning traces as a latent Markov process over intermediate reasoning states, reflecting the local step-to-step structure of mathematical derivations \citep{wang2026doeschainofthoughthelpmarkovian,prystawski2023thinkstepstepreasoning}. Under this CoT-Markov assumption, we derive that the expected gradient magnitude on attention logits decays exponentially with token distance. We empirically validate this prediction by measuring token autocorrelation and gradients of attention-logit. These results suggest that CoT-SFT can improve local multi-step reasoning while eroding the long-range routing behavior needed for retrieval.


To better isolate the source of recall degradation, we analyze the retained softmax-attention layers through a routing-extraction decomposition. We find that CoT-SFT induces locality-biased drift in the query-key projections $\mW_Q,\mW_K$, which determine the source of information retrieval, whereas value-side extraction can still benefit from post-SFT adaptation. Since long-context recall in distilled hybrid models depends heavily on a small number of retained softmax-attention layers, such query-key drift can disproportionately disrupt retrieval. Motivated by this asymmetry, we propose \textsc{QK-Restore}, a training-free method that restores only $\mW_Q$ and $\mW_K$ in these layers from the pre-SFT checkpoint while preserving all other post-SFT parameters, thereby recovering long-range routing while retaining most reasoning improvements from CoT-SFT. Our contributions can be summarized as:
\begin{itemize}[leftmargin=*, itemsep=1pt, topsep=2pt, parsep=0pt]
    \item We identify CoT-SFT-induced recall degradation as a structured post-training failure mode of distilled hybrid models.
    \item We provide a theoretical and empirical analysis showing that CoT-SFT concentrates training signals on local token interactions and induces locality-biased drift in the query-key routing geometry of retained softmax-attention layers.
    \item We introduce \textsc{QK-Restore}, a training-free method that restores only query and key projections, recovering long-context recall while largely preserving reasoning performance.
\end{itemize}

\begin{figure*}[!t]
    \centering
    \includegraphics[width=0.98\textwidth]{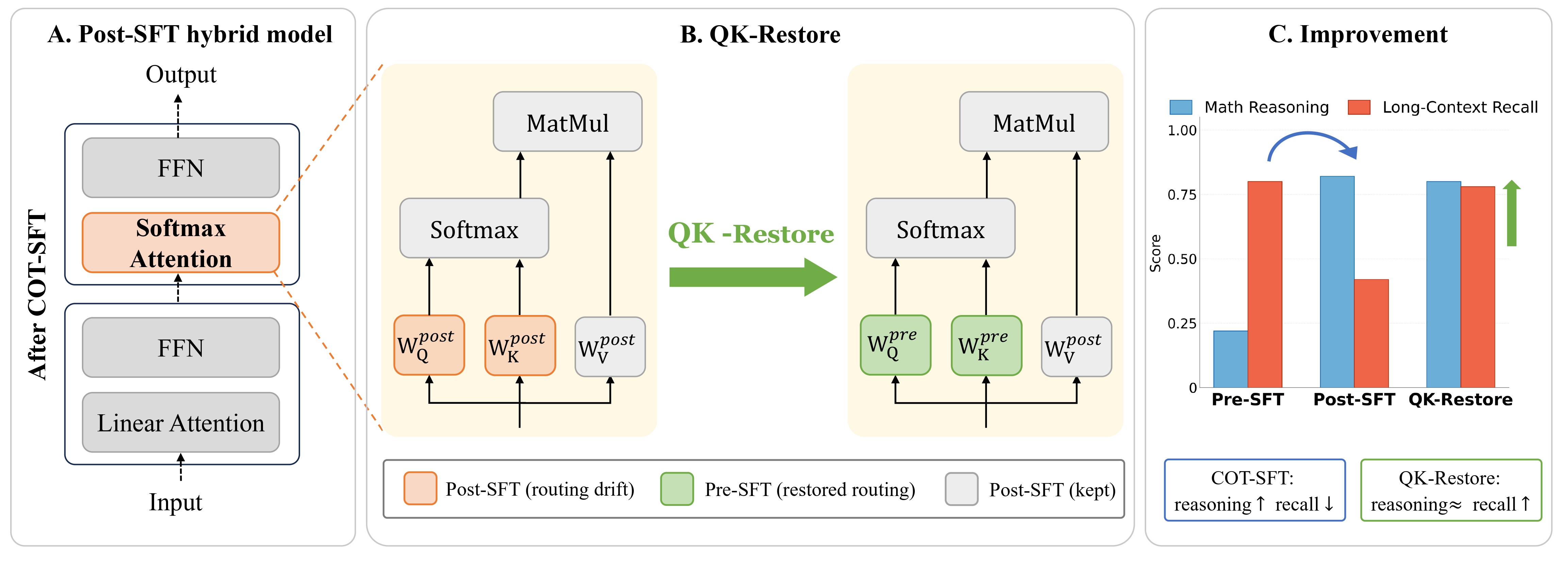}
    \vspace{-2mm}
    \caption{Overview of \textsc{QK-Restore}. After CoT-SFT, the retained softmax-attention layers in a hybrid model may lose long-range routing ability. \textsc{QK-Restore} restores only the query and key projections ($\mW_Q, \mW_K$) from the pre-SFT checkpoint while keeping value-side and other post-SFT parameters unchanged, recovering long-context recall while preserving reasoning performance.}
    \label{fig:qk-restore-overview}
    \vspace{-1.5mm}
\end{figure*}
\section{Related Work}

\noindent\textbf{Efficient Long-Context Models and Hybrid Attention.}
Recent hybrid attention models interleave softmax-attention layers with efficient recurrent or linear-attention layers, achieving competitive performance on both commonsense reasoning and recall-intensive long-context tasks while improving inference efficiency \citep{qwen35blog,kimiteam2025kimilinearexpressiveefficient,nvidia2025nvidianemotron3efficient}. Pretraining strong hybrid models from scratch at large scale remains prohibitively expensive, which motivates recent efforts to obtain hybrid models through Transformer-to-hybrid conversion or distillation \citep{chen2026hybridlinearattentionright,li2025distilling}.  However, existing work primarily focus to obtaining high-performing converted hybrid checkpoints, while their behavior during subsequent post-training remains underexplored.

\noindent\textbf{Distilling Transformers into Hybrid Models.}
Transformer-to-hybrid distillation converts selected softmax-attention layers into linear or recurrent mixers, where layer selection critically influences long-context retrieval performance \citep{goldstein2026radladsrapidattentiondistillation,chen2026hybridlinearattentionright,li2025distilling,gu2026jet}. However, strong recall performance after conversion does not necessarily imply stability after reasoning-oriented post-training; specifically, across multiple distilled hybrid models with strong long-context retrieval, we observe that CoT-SFT can substantially degrade recall capabilities, revealing a critical failure mode that remains unaddressed by existing distillation methods.

\noindent\textbf{CoT Dynamics and Attention Routing.}
Recent work has studied CoT reasoning as a structured generation process over intermediate reasoning states \citep{wang2026doeschainofthoughthelpmarkovian,prystawski2023thinkstepstepreasoning}. 
A Markovian view formalizes this structure by modeling reasoning as local transitions between latent states, suggesting that CoT supervision primarily reinforces short-range step-to-step dependencies \citep{wang2026doeschainofthoughthelpmarkovian}. 
In contrast to prior work that utilizes this locality to explain the efficacy of CoT in reasoning, we connect it to a failure mode in efficient long-context models: locality-biased CoT-SFT can erode long-range attention routing required for recall. 
This perspective aligns with analyses of attention as an information-routing mechanism, where query-key interactions define routing logits that select positions, while value projections determine the content extracted from those positions \citep{vaswani2023attentionneed}.

\section{Background and Problem Setup}
\label{sec:prelim}
\subsection{Hybrid Model}
\label{sec:hybrid-model}
 In this work, we primarily focus on the hybrid model. To retain the expressive power of softmax-attention while improving long-context efficiency, hybrid architectures interleave a small set of softmax-attention layers with efficient linear-attention layers \citep{yang2025zebra, chen2026hybridlinearattentionright}.
 
\noindent\textbf{Softmax-Attention Layers}
\label{sec:full_attn}
For Transformer's layer $\ell$, head $h \in \{1,\ldots,H\}$, the per-head projection matrices $\mW_{Q}^{(\ell,h)},\mW_{K}^{(\ell,h)},\mW_{V}^{(\ell,h)}
\in \RR^{d_h \times d}$ ($d_h = d/H$) transform hidden state $\vh_t^{(\ell)} \in \R^{d}$:
\begin{equation}
\begin{aligned}
  \vq_t^{(\ell,h)} &= \mW_Q^{(\ell,h)} \vh_t^{(\ell)},\quad
  \vk_s^{(\ell,h)} = \mW_K^{(\ell,h)} \vh_s^{(\ell)},\\
  \vv_s^{(\ell,h)} &= \mW_V^{(\ell,h)} \vh_s^{(\ell)},
\end{aligned}
\end{equation}
with scalar logit and attention weight
{\small
\begin{equation}
\begin{aligned}
    e_{ts}^{(\ell,h)} = \frac{(\vq_t^{(\ell,h)})^{\!\top}\vk_s^{(\ell,h)}}{\sqrt{d_h}},
  A_{ts}^{(\ell,h)} = \frac{\exp{e_{ts}^{(\ell,h)}}}{\sum_{u \le t} \exp{e_{tu}^{(\ell,h)}}}.
  \label{eq:attn_score}
\end{aligned}
\end{equation}}
The per-head output vector and layer output are
\begin{equation}
  \bar{\vv}_t = \sum_{s \le t} A_{ts}\,\vv_s,
  \vo_t^{(\ell)} = \mW_{O}\,\mathrm{Concat}_h\!\bigl(\bar{\vv}_t^{(h)}\bigr),
  \label{eq:attn_out}
\end{equation}

\noindent\textbf{Linear-Attention Layers}
\label{sec:lin_attn}
For layer $\ell$, a recurrent state matrix
$\mS_t \in \RR^{d_k \times d_v}$ evolves via
{\small
\begin{equation}
\mS_t^{(\ell)} = \Phi_t^{(\ell)}\!\left(\mS_{t-1}^{(\ell)},\; \vh_t^{(\ell-1)}\right),
\vo_t^{(\ell)} = \Psi_t^{(\ell)}\!\left(\mS_t^{(\ell)},\; \vh_t^{(\ell-1)}\right),
  \label{eq:recurrence}
\end{equation}}
where $\Phi_t$, $\Psi_t$ are layer-specific update and readout operators. Taking Lightning Attention \citep{qin2024lightningattention2freelunch} as an example, the state update and readout are:
\begin{align}
    \mS_t^{(\ell)} &= \mathrm{diag}(\lambda_t^{(\ell)})\, \mS_{t-1}^{(\ell)} + \vk_t^{(\ell)}\bigl(\vv_t^{(\ell)}\bigr)^\top,\\
    \vo_t^{(\ell)} &= \bigl(\mS_t^{(\ell)}\bigr)^\top \vq_t^{(\ell)}.
\end{align}

 We denote the set of softmax-attention layers as ${L}_{\text{attn}}$. Empirical studies suggest that long-range recall in hybrid models depends disproportionately on a set of softmax-attention layers, whereas most remaining layers can be replaced by recurrent mechanisms with minimal degradation \citep{wang2025systematicanalysishybridlinear, chen2026hybridlinearattentionright,jelassi2024repeat}. Thus, preserving the routing behavior of ${L}_{\text{attn}}$ is critical for long-context recall.

\subsection{Attention Routing}

\label{sec:routing_stored}
We operationalize attention-routing through the pre-softmax logit $e_{ts} = \vq_t^{\top}\vk_s / \sqrt{d_h}$, which determines how strongly position $t$ attends to position $s$. Since the routing pattern is determined solely by $\mW_Q,\mW_K$, changes to these matrices directly alter long-range retrieval behavior. To understand how CoT-SFT training affects retrieval behavior, we examine the gradients w.r.t QK metrics. The gradient update of $\mW_Q$ is:
\begin{equation}
  \nabla_{\mW_Q}\Ls
  = \frac{1}{\sqrt{d_h}}\sum_{t,s}
    \frac{\partial\Ls}{\partial e_{ts}}\,\vk_s\,\vh_t^{\!\top}.
  \label{eq:wq_grad}
\end{equation}
If $\partial\Ls/\partial e_{ts}$ is large for small distance $\tau = t{-}s$ and
negligible for large $\tau$, gradient descent systematically pushes
$\mW_Q, \mW_K$ toward local patterns regardless of context length. 
Furthermore, we have:
\begin{equation}
  \frac{\partial\Ls}{\partial e_{ts}}
  = A_{ts}\cdot \mG_t^{\!\top}(\vv_s - \bvbar_t),
  \label{eq:grad_factored}
\end{equation}
where $\mG_t = \partial\Ls/\partial \vo_t \in \R^{d_h}$ is the
{gradient vector}. So understanding the distance dependence of ${\partial\Ls}/{\partial e_{ts}}$ becomes central to understanding how CoT-SFT affects retrieval.





\section{Why CoT-SFT Disrupts Routing?}
\label{sec:generative}
  
\subsection{CoT Data Assumption}
We first characterize CoT data structure via a latent Markov model \citep{wang2026doeschainofthoughthelpmarkovian, prystawski2023thinkstepstepreasoning}, which motivates the gradient analysis that follows.

\begin{assumption}[CoT-Markov structure]
\label{asm:cot_markov}
There exist latent reasoning states $z_1,\ldots,z_T \in \gZ$,
$|\gZ| = K < \infty$, such that: 

\noindent\textbf{1. Markov transitions:} $P(z_t \mid z_{t-1}, z_{t-2},\ldots) = P(z_t \mid z_{t-1})$. 

\noindent\textbf{2. Observation model:} $P(x_t \mid z_t, x_{t-1}) = P(x_t \mid z_t)$. The token at position $t$ is determined by the current reasoning state. 

\noindent\textbf{3. Ergodicity:} The chain is irreducible and aperiodic with stationary distribution $\pi$,
    $\pi_{\min} := \min_{z \in \gZ}\pi(z) > 0$.
    
\noindent\textbf{4. Reversibility:} Detailed balance holds: $\pi(z)\,P(z,z') = \pi(z')\,P(z',z)$ for all $z,z'\in\gZ$. 
    
\noindent\textbf{5. Spectral gap:} The transition matrix has second-largest eigenvalue magnitude $\rho \in (0,1)$.
\end{assumption}
We include the discussion of the validity of this assumption in \autoref{appdix:discuss-cot}.

\subsection{Gradient Locality Theory}
\label{sec:gradient_locality}
Softmax-attention layers $\ell \in \Lattn$ are the sole locus of long-range recall in a hybrid model. Therefore, we analyze how CoT-SFT erodes their routing capacity in these layers through gradient locality.

\subsubsection{Definition and Assumptions}

\begin{definition}[Distance-conditioned gradient magnitude]
\label{def:g_d}
\begin{equation}
  g(\tau) \;:=\; \E_{t,\,\ell \in \Lattn,\,h}
  \!\left[\,\left|\frac{\partial\Ls}{\partial
      e_{t,\,t-\tau}^{(\ell,h)}}\right|\,\right].
\end{equation}
\end{definition}

 \begin{assumption}[Norm bounds]   
  \label{asm:norm} 
  There exist constants $B_G, B_V > 0$ such that, almost surely over the training distribution,   
  \begin{equation}  
    \norm{\mG_t}_2 \;\le\; B_G  
    \quad\text{and}\quad
    \norm{\vv_s}_2 \;\le\; B_V.    
  \end{equation}
  \end{assumption}                                   

\begin{assumption}[Score-function identity]
\label{asm:score}
Let $\mG_t = \partial\Ls/\partial\vo_t$ denote the gradient
of the loss with respect to the per-head output at position $t$. At approximate optimality under cross-entropy, the model satisfies, for all $t$ and almost surely over $x_{1:T}$,
\begin{equation}
  \E\bigl[\mG_t \big|x_{1:t}\bigr] \;=\; \vzero.
  \label{eq:score_identity}
\end{equation}
\end{assumption}

The justifications for the above assumptions are included in \autoref{appdix:norm-bound-assumption} and \autoref{appdix:score-func-assumption}.

\subsubsection{Gradient Locality Theorem}
\begin{lemma}[Spectral Correlation Decay]
\label{lemma:spectral-correclation-decay}
For a stationary reversible ergodic Markov
chain with spectral gap $1-\rho$, and any mean-zero $f, g \in L^2(\pi)$:    
\begin{equation}
    \bigl|\mathbb{E}[f(z_t)\,g(z_s)]\bigr| \;\le\; \rho^{\,t-s}\,\|f\|_{L^2(\pi)}\,\|g\|_{L^2(\pi)},
\end{equation}
\end{lemma}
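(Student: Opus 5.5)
We prove this with the standard spectral argument for reversible chains; we assume $t \ge s$. Write $P$ for the transition matrix and view it as an operator on $\Ltwopi$ via $(Pf)(z) = \sum_{z'} P(z,z') f(z')$. First, by stationarity and the Markov property of \Cref{asm:cot_markov}, we condition on $z_s$:
\begin{equation}
  \E[f(z_t)\,g(z_s)] = \E\bigl[g(z_s)\,\E[f(z_t)\mid z_s]\bigr] = \sum_{z}\pi(z)\,g(z)\,(P^{t-s}f)(z) = \langle g, P^{t-s} f\rangle_{\Ltwopi}.
\end{equation}
The whole statement therefore reduces to bounding $\|P^{t-s}f\|_{\Ltwopi}$ for mean-zero $f$, after which Cauchy--Schwarz in $\Ltwopi$ finishes the proof.

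Second, we use reversibility. Detailed balance $\pi(z)P(z,z')=\pi(z')P(z',z)$ is exactly the statement that $P$ is self-adjoint on $\Ltwopi$: $\langle g, Pf\rangle_{\pi} = \langle Pg, f\rangle_{\pi}$. Since $\gZ$ is finite, the spectral theorem gives an $\Ltwopi$-orthonormal eigenbasis $\phi_1,\dots,\phi_K$ with real eigenvalues $1=\lambda_1 \ge \lambda_2\ge\dots\ge\lambda_K \ge -1$. Here $\phi_1 \equiv \1$, because $P$ is stochastic. Irreducibility and aperiodicity (ergodicity) imply that the eigenvalue $1$ is simple and that $-1$ is not an eigenvalue. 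The spectral-gap item of \Cref{asm:cot_markov} then says $\max_{k\ge 2}|\lambda_k| = \rho < 1$.

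Third, we use the mean-zero condition. $\E_\pi f = \langle f, \1\rangle_\pi = 0$ means $f$ lies in the orthogonal complement of $\phi_1$, so $f = \sum_{k\ge2} c_k\phi_k$. The complement is $P$-invariant, hence $P^{n}f = \sum_{k\ge 2}\lambda_k^{n} c_k\phi_k$ and
\begin{equation}
  \|P^{n}f\|_{\Ltwopi}^2 = \sum_{k\ge2}\lambda_k^{2n}c_k^2 \le \rho^{2n}\|f\|_{\Ltwopi}^2 .
\end{equation}
Setting $n=t-s$ and applying Cauchy--Schwarz gives $|\langle g, P^{t-s}f\rangle_\pi| \le \|g\|_{\Ltwopi}\,\rho^{t-s}\|f\|_{\Ltwopi}$, which is the claim. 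Only $f$ needs to be mean-zero for this argument; the hypothesis on $g$ is not needed.

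No step is a serious obstacle. The one point needing care is the conditioning step: we must make sure stationarity is used so that $z_s\sim\pi$. Without it, the inner product would be taken with respect to the law of $z_s$ rather than $\pi$, and the self-adjointness argument would not apply. We should also interpret ``second-largest eigenvalue magnitude'' as $\max_{k\ge2}|\lambda_k|$, so that it controls the negative eigenvalues as well. Reversibility is essential: without it $P$ need not be normal, and the operator norm on the mean-zero subspace could exceed $\rho$.
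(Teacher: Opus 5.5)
Your proof is correct and follows the paper's argument. You condition on $z_s$ using stationarity to obtain $\langle P^{t-s}f, g\rangle_\pi$, use reversibility to get an $L^2(\pi)$-orthonormal eigenbasis, and remove the $\lambda_1=1$ component via the mean-zero hypothesis. The only cosmetic difference is in the last step: the paper expands both $f$ and $g$ in the eigenbasis and applies Cauchy--Schwarz to the coefficient sum $\sum_{k\ge 2}|a_k||b_k|$, whereas you bound $\|P^{t-s}f\|_{L^2(\pi)}\le\rho^{t-s}\|f\|_{L^2(\pi)}$ and apply Cauchy--Schwarz in $L^2(\pi)$ directly, which makes explicit that only $f$ needs to be mean-zero.
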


\begin{theorem}[Gradient locality]
\label{thm:gradient_locality}
Let $B_G, B_V > 0$ be the constants from Assumption~\ref{asm:norm},
and let $\rho \in (0,1)$ be the spectral radius from
Assumption~\ref{asm:cot_markov}.
Under Assumptions~\ref{asm:norm}--\ref{asm:score} and the CoT-Markov
model (Assumption~\ref{asm:cot_markov}), for all $\tau \ge 0$:
\begin{equation}
  \boxed{g(\tau) \;\le\; C'\cdot e^{-\tau/W},}
  \label{eq:main_bound}
\end{equation}
where
\begin{equation}
  C' :=\; 2B_G B_V \;>\; 0,\;
  W :=\; \frac{-1}{\log\rho} \;>\; 0.
  \label{eq:constants}
\end{equation}
\end{theorem}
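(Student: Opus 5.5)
The plan is to separate a pointwise bound, which supplies the constant $C'=2B_GB_V$, from a correlation argument, which supplies the factor $\rho^\tau=e^{-\tau/W}$. The factor must come from Lemma~\ref{lemma:spectral-correclation-decay} applied to the latent chain $z_{1:T}$. One point decides whether this works. By Definition~\ref{def:g_d}, $g(\tau)$ is an expectation of an \emph{absolute value}, so the cancellation from Assumption~\ref{asm:score} cannot be used directly inside $|\cdot|$. I therefore have to identify which part of $\partial\Ls/\partial e_{t,t-\tau}$ can be controlled in absolute value and which part only in signed expectation.

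\textbf{Step 1 (pointwise bound).} Write $s=t-\tau$ and start from \eqref{eq:grad_factored}. Since $\bvbar_t$ is a convex combination of the $\vv_u$, Assumption~\ref{asm:norm} gives $\norm{\vv_s-\bvbar_t}_2\le 2B_V$. Cauchy--Schwarz then yields
\[
\Bigl|\tfrac{\partial\Ls}{\partial e_{ts}}\Bigr|\le A_{ts}\,\norm{\mG_t}_2\,\norm{\vv_s-\bvbar_t}_2\le 2B_GB_V\,A_{ts}.
\]
This already gives $g(\tau)\le C'$, which is the case $\tau=0$ of \eqref{eq:main_bound}. It also reduces the problem for $\tau\ge1$ to producing an extra multiplicative factor $\rho^\tau$, with the prefactor $2B_GB_V$ unchanged.

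\textbf{Step 2 (decomposition into a lag-coupled part and a noise part).} Condition on $x_{1:t}$ and split $\mG_t=\E[\mG_t\mid z_t]+\boldsymbol{\xi}_t$.
\begin{itemize}
\item By Assumption~\ref{asm:cot_markov}(2), the token-level dependence between $\mG_t$ and $\vv_s$ flows only through the latent path $z_s\to z_t$.
\item For the systematic part, take $f(z_t)=\E[\mG_t\mid z_t]$, componentwise, and $h(z_s)=\E[\vv_s-\bvbar_t\mid z_s]$, each centered under $\pi$.
\item Lemma~\ref{lemma:spectral-correclation-decay} gives $|\E[f(z_t)^{\top}h(z_s)]|\le\rho^\tau\norm{f}_{\Ltwopi}\norm{h}_{\Ltwopi}$, and Assumption~\ref{asm:norm} gives $\norm{f}_{\Ltwopi}\le B_G$ and $\norm{h}_{\Ltwopi}\le 2B_V$.
\item Since $A_{ts}\le1$, this part contributes at most $2B_GB_V\rho^\tau=C'e^{-\tau/W}$, using $\rho^\tau=e^{\tau\log\rho}=e^{-\tau/W}$.
\end{itemize}
The score identity \eqref{eq:score_identity} is what makes the centering of $f$ legitimate, because it forces $\E[\mG_t]=\vzero$.

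\textbf{Step 3 (main obstacle: the noise term in absolute value).} The residual $\boldsymbol{\xi}_t^{\top}(\vv_s-\bvbar_t)$ has zero conditional mean given $x_{1:t}$ by Assumption~\ref{asm:score}. However, its absolute value does not vanish and is not obviously $\tau$-dependent. As stated, the bound therefore needs either
\begin{itemize}
\item an argument that $A_{t,t-\tau}$ itself inherits the decay, for instance because the routing logits are functions of the latent states and so decorrelate at rate $\rho$; or
\item an interpretation in which the per-token noise averages out over $t$ before the absolute value is taken.
\end{itemize}
I would try the second first. Because $g(\tau)$ averages over $t$, $\ell$ and $h$, I would take the expectation over $t$ of the signed gradient conditional on $z_{t-\tau},z_t$, and apply Jensen only after the martingale-difference noise $\boldsymbol{\xi}_t$ has been integrated out. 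If that route is blocked, I would fall back on the first option: show $\E[A_{t,t-\tau}]\lesssim\rho^\tau$ under Assumption~\ref{asm:cot_markov}, which combines with Step~1 directly. I expect this step to be where the proof is delicate, and I would state explicitly which of the two reductions is being used.
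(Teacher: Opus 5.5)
Your Step~3 is exactly where the argument breaks, and the paper's proof does not get past it either. The paper splits $|\partial\Ls/\partial e_{ts}|\le|\mG_t\cdot\vv_s|+|\mG_t\cdot\bvbar_t|$ and factors $\E[\mG_t\cdot\vv_s]=\E[\phi(z_t)\cdot\psi(z_s)]$ with $\phi(z)=\E[\mG_t\mid z_t=z]$ and $\psi(z)=\E[\vv_s\mid z_s=z]$. It then applies Lemma~\ref{lemma:spectral-correclation-decay} to get $|\E[\mG_t\cdot\vv_s]|\le B_GB_V\rho^\tau$, and likewise for $\bvbar_t$. Finally it writes $2B_GB_V\rho^\tau$ as a bound on $g(\tau)=\E|\partial\Ls/\partial e_{ts}|$, silently replacing $\E|\cdot|$ by $|\E[\cdot]|$. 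Your Steps 1--2 follow the same structure; grouping $\vv_s-\bvbar_t$ with norm $2B_V$ instead of using the triangle inequality gives the same $C'$. So your proposal has a genuine gap, and it is not confined to Step~3.

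Step~2 already has the defect you flag. The quantity $\rho^\tau\norm{f}_{\Ltwopi}\norm{h}_{\Ltwopi}$ controls $|\E[f(z_t)^{\top}h(z_s)]|$. It does not control the contribution $\E|A_{ts}f(z_t)^{\top}(\vv_s-\bvbar_t)|$ of the systematic part to $g(\tau)$, which need not decay at all: even for independent $z_s,z_t$, the mean absolute value of a product of mean-zero factors is generally positive. Also, $A_{ts}\le 1$ cannot be pulled out of a signed expectation.

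Two further steps are unsupported. First, shared with the paper: Assumption~\ref{asm:cot_markov}(2) does not make $\mG_t$ and $\vv_s$ conditionally independent given $(z_s,z_t)$, because both are network functions of the common prefix $x_{1:s}$. Second, your residual is not a martingale difference, since $\E[\bm{\xi}_t\mid x_{1:t}]=-\E[f(z_t)\mid x_{1:t}]$, which is nonzero whenever the prefix carries information about $z_t$.

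Neither way out in Step~3 works. Your preferred route, integrating the noise out before taking absolute values, runs into Jensen in the wrong direction. For any $\sigma$-field $\mathcal{F}$, $\E\bigl|\E[X\mid\mathcal{F}]\bigr|\le\E|X|$, so you would be bounding something smaller than $g(\tau)$; in effect this redefines $g$. In fact the signed quantity is trivial. Since $A_{ts}$, $\vv_s$ and $\bvbar_t$ are functions of $x_{1:t}$, Assumption~\ref{asm:score} gives $\E[\partial\Ls/\partial e_{ts}\mid x_{1:t}]=A_{ts}\,\E[\mG_t\mid x_{1:t}]^{\top}(\vv_s-\bvbar_t)=0$ for every $\tau$. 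A $\rho^\tau$ bound on it therefore carries no information.

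Your fallback $\E[A_{t,t-\tau}]\lesssim\rho^\tau$ is not implied by Assumptions~\ref{asm:cot_markov}--\ref{asm:score}, which say nothing about the attention pattern. Take a head with uniform attention $A_{ts}=1/t$ and $\E|\mG_t^{\top}(\vv_s-\bvbar_t)|\ge c>0$. It has $g(\tau)\ge c/T$ for all $\tau<T$, which exceeds $C'\rho^\tau$ once $\tau\gtrsim W\log(C'T/c)$. So the boxed bound on $\E|\cdot|$ does not follow from the stated assumptions, by your route or the paper's.

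What the spectral argument legitimately gives is decay of a signed latent covariance such as $|\E[\phi(z_t)^{\top}\psi(z_s)]|\le B_GB_V\rho^\tau$. To obtain a theorem about $g(\tau)$ you must do one of two things. Either redefine $g$ in such signed or latent terms, or add a hypothesis that directly controls the absolute value, for instance decay with distance of attention mass or of $\E|\mG_t^{\top}(\vv_s-\bvbar_t)|$.
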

The constant $C'$ depends only on the model's norm bounds; the decay rate $W$ is set entirely by the spectral gap of the latent transition matrix. The full proof is included in \autoref{appdix:proof-gradient-locality}.

\subsection{Empirical Validation}
Theorem \ref{thm:gradient_locality} shows that CoT training preferentially reinforces nearby token interactions, causing long-range routing signals to decay with distance.
The theoretical chain rests on two empirical claims about data and model:

\noindent\textbf{1. Lemma \ref{lemma:spectral-correclation-decay}'s hypothesis}: the latent Markov chain governing token generation has a spectral gap $1-\rho$. However, since the latent states $z_t$ are unobservable, we instead consider the observed token sequence and use the token autocorrelation $\bar\rho(\tau)$ as a proxy for $\rho$. Concretely, $\bar\rho(\tau)$ measures how much more likely a token is to reappear at distance $\tau$ than by chance, averaged over all token types weighted by their stationary frequency $\pi(v)$. As proved in \autoref{appdix:token-autocorrelation-proof}, $\bar\rho(\tau) \le C\cdot\rho^{\tau}$ under the hidden markov model.


\noindent\textbf{2. Theorem \ref{thm:gradient_locality}'s prediction}: the expected gradient magnitude $\mathbb{E}[|\partial\mathcal{L}/\partial e_{ts}|]$ decays exponentially in $\tau$, with a smaller effective window for CoT data than for general text. We measure $\mathbb{E}[|\partial\mathcal{L}/\partial e_{ts}|]$ at each distance $\tau$ directly from the model:
\begin{align}
    \frac{\partial\mathcal{L}}{\partial e_{ts}}
= A_{ts}\!\left(\frac{\partial\mathcal{L}}{\partial A_{ts}}
  - \sum_{s'} A_{ts'}\frac{\partial\mathcal{L}}{\partial A_{ts'}}\right).
\end{align}


 \begin{figure*}[ht]
      \centering
      \includegraphics[width=0.98\linewidth]{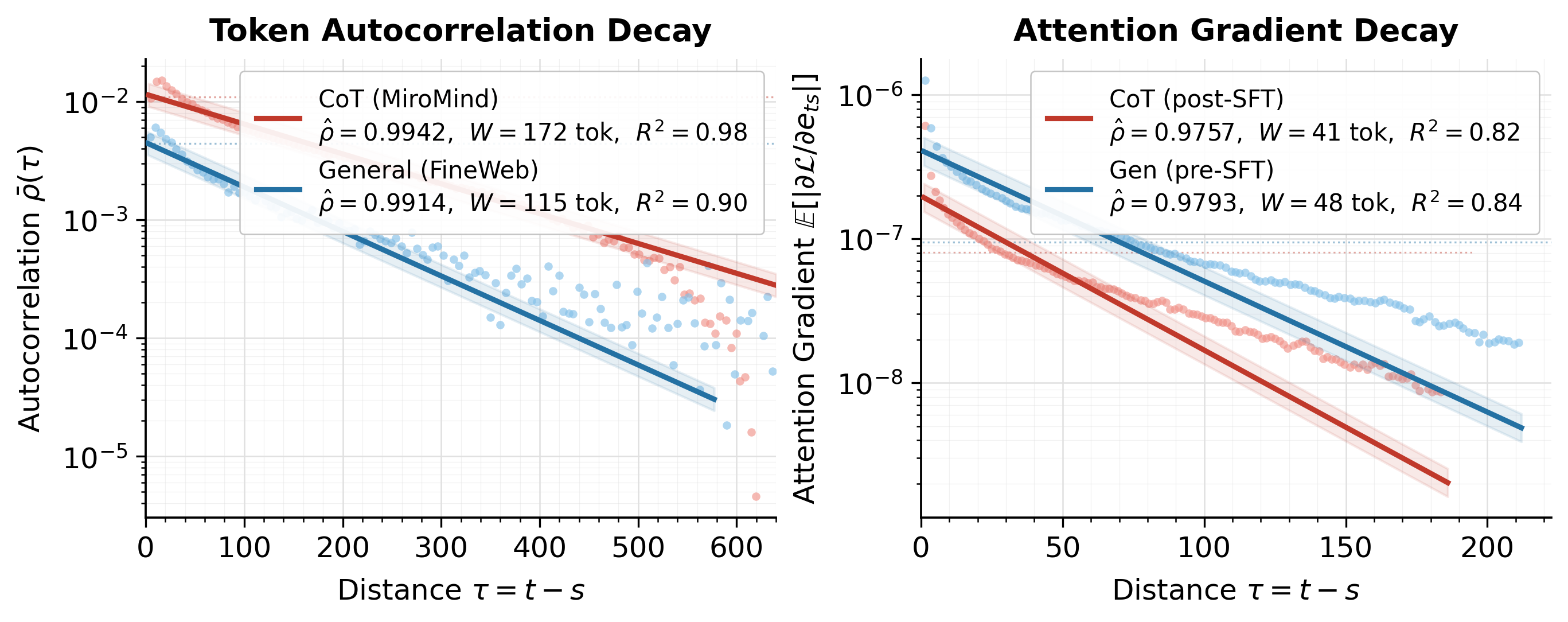}
      \caption{Empirical validation for token autocorrelation decay and attention gradient decay on HypeNet-2B.}
      \label{fig:empirical-validation}
      \vspace{-1mm}
  \end{figure*}

\noindent\textbf{3. Curve Fitting.} Both $\bar\rho(\tau)$ and $g(\tau)$ take the form $C\cdot e^{-\tau/W}$, where $W=-1/\log{\rho'}$ serves as a natural scale of the decay. Therefore, we fit the model $y(\tau) = C \cdot \rho^{\tau} = C \cdot e^{-\tau/W}$ via log-linear regression.  Taking logarithms linearizes the model:
\begin{align}
    \log y(\tau) = \log C - \frac{1}{W}\tau.
\end{align}
\noindent\textbf{4. Interpretation.} As shown in \autoref{fig:empirical-validation}, the left subfigure presents the token autocorrelation decay parameter $W_{\text{corr}}$, a corpus-level statistic that measures the characteristic distance over which text retains structured, non-random self-similarity. Because mathematical reasoning repeatedly references the same symbols, variables, and formula fragments, CoT text sustains high self-similarity over longer distances ($W_{\text{corr}} \approx 172$ tok) than general prose ($W_{\text{corr}} \approx 115$ tok). The right subfigure shows the attention gradient decay parameter $W_{\text{grad}}$, which characterises the training dynamics by measuring the effective reach of the gradient signal $\mathbb{E}[|\partial\mathcal{L}/\partial e_{ts}|]$. Both decay curves are well described by exponential fits, consistent with the thoerem's prediction.


When $W_{\text{grad}} < W_{\text{corr}}$, there exists a band of distances $d \in (W_{\text{grad}},, W_{\text{corr}}]$ where the data demands long-range attention but the training gradient no longer reinforces it. The discussions about the attention gradient decay also observed in the pre-training stage are included \autoref{appdix:discussion-gen-decay}.

\begin{tcolorbox}[
 width=\columnwidth,
  title=\textsc{Key Takeaway-1},
  colback=purple!5!white,
  colframe=red!75!black
]
CoT-SFT improves short-range reasoning ability in hybrid models, at the cost of long-range retrieval.
\end{tcolorbox}

\section{Method}
Theorem \ref{thm:gradient_locality} has established that the gradient on the attention logit $e_{ts}$ decays geometrically in distance $\tau = t - s$. Now we investigate which weight metrics this decay propagates into.

\subsection{Routing–Extraction Gradient Decoupling}
\begin{theorem}[Routing-Extraction Gradient Decoupling]
\label{theom:RE-gradient-decouple}
Under CoT-SFT, where the token-generation process is modeled as a stationary reversible ergodic Markov chain with a spectral gap $1 - \rho > 0$, gradient updates to the parameters of softmax-attention layer exhibit the following asymmetry:

\noindent\textbf{[Routing]} The per-pair contribution to $\nabla_{\mW_Q}$ from position pairs at distance $\tau = t - s$ satisfies:
\begin{equation*}
\boxed{\mathbb{E}\!\left[
  \left\|\frac{\partial \mathcal{L}}{\partial e_{ts}}\, \vk_s\, \vh_t^{\top} \right\|_F
\right] \le C_R \cdot \rho^{\tau} }   
\end{equation*}
for constant $C_R = C' B_K B_h > 0$. The same bound holds for $\nabla_{\mW_K}$. 

\noindent\textbf{[Extraction]} The gradient on the value vector at any position $s$ satisfies:
\begin{equation*}
\boxed{\mathbb{E}\!\left[\left\|\frac{\partial \mathcal{L}}{\partial \vv_s}\right\|\right] \ge \delta_A\cdot  c_G > 0,}
\end{equation*}
where $\delta_A > 0$ satisfies $A_{tt} \ge \delta_A$ for all $t$ and $c_G > 0$ satisfies $\mathbb{E}[\|\mG_s\|] \ge c_G$ for all $s$.
\end{theorem}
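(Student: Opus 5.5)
The plan is to prove the two parts separately. Both rest on the factorization \eqref{eq:grad_factored} and on Theorem~\ref{thm:gradient_locality}.

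\textbf{Routing part.} The per-pair term $\frac{\partial\Ls}{\partial e_{ts}}\vk_s\vh_t^{\top}$ is a scalar times a rank-one matrix. Its Frobenius norm therefore factorizes exactly:
\begin{equation*}
\left\|\frac{\partial\Ls}{\partial e_{ts}}\vk_s\vh_t^{\top}\right\|_F=\left|\frac{\partial\Ls}{\partial e_{ts}}\right|\,\|\vk_s\|_2\,\|\vh_t\|_2 .
\end{equation*}
I will add, in the spirit of Assumption~\ref{asm:norm}, almost-sure bounds $\|\vk_s\|_2\le B_K$ and $\|\vh_t\|_2\le B_h$. These are the constants appearing in $C_R$, and the paper's norm-bound justification presumably covers them. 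With these bounds,
\begin{equation*}
\E\left\|\cdot\right\|_F\le B_KB_h\,\E\left|\frac{\partial\Ls}{\partial e_{ts}}\right|\le B_KB_h\,C'\rho^{\tau}.
\end{equation*}
The last step uses Theorem~\ref{thm:gradient_locality}, since $e^{-\tau/W}=\rho^{\tau}$ when $W=-1/\log\rho$.

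One subtlety needs care. Theorem~\ref{thm:gradient_locality} bounds $g(\tau)$, which is an average over $t$, layers and heads. The routing statement instead concerns a single pair. I will therefore re-run the proof of that theorem at fixed $(t,s,\ell,h)$ rather than cite its averaged form; the argument should apply pointwise.

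That pointwise argument goes as follows. Write $\partial\Ls/\partial e_{ts}=A_{ts}\mG_t^{\top}(\vv_s-\bvbar_t)$. Condition on $x_{1:t}$ using Assumption~\ref{asm:score}, and reduce the dependence between $\mG_t$ and the content at $s$ to a covariance of mean-zero functions of the latent states. Then apply Lemma~\ref{lemma:spectral-correclation-decay}, bounding the $L^2(\pi)$ norms by $B_G$ and $2B_V$ (the latter since $\|\vv_s-\bvbar_t\|\le 2B_V$). The factor $A_{ts}\le1$ is simply dropped. The same computation with the gradient
\begin{equation*}
\nabla_{\mW_K}\Ls=\frac{1}{\sqrt{d_h}}\sum_{t,s}\frac{\partial\Ls}{\partial e_{ts}}\vq_t\vh_s^{\top}
\end{equation*}
gives the $\mW_K$ claim, with $B_Q$ playing the role of $B_K$ (or one common bound for both).

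\textbf{Extraction part.} Differentiating \eqref{eq:attn_out} gives
\begin{equation*}
\frac{\partial\Ls}{\partial\vv_s}=\sum_{t\ge s}A_{ts}\mG_t .
\end{equation*}
The plan is to lower bound this sum by the diagonal term $A_{ss}\mG_s$. The obstacle is cancellation among the terms with $t>s$, and there are two routes around it:
\begin{itemize}
\item Read the statement as concerning the contribution of the diagonal pair. Then $\E\|A_{ss}\mG_s\|\ge\delta_A\,\E\|\mG_s\|\ge\delta_Ac_G$ directly.
\item Handle the full sum via the score identity. Assumption~\ref{asm:score} makes the $\mG_t$ a martingale-difference sequence, so for the $\Ltwopi$ norm the cross terms vanish:
\begin{equation*}
\E\left\|\sum_t A_{ts}\mG_t\right\|^2=\sum_t\E\|A_{ts}\mG_t\|^2\ge\E\|A_{ss}\mG_s\|^2 .
\end{equation*}
\end{itemize}
The second route has its own difficulties. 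The cross terms only vanish if $A_{ts}$ is $x_{1:t'}$-measurable for $t'>t$, which holds under causal attention. It also yields an $L^2$ lower bound rather than an $L^1$ one. Converting to $L^1$ needs either Jensen in the reverse direction, which fails, or a reading of $c_G$ as a second-moment constant.

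I therefore expect this last step, obtaining an $L^1$ lower bound on the full sum, to be the main obstacle. My fallback is the first route: state the extraction bound for the diagonal contribution, or assume that $c_G$ controls $\E\|\mG_s\|$ in the conditional sense needed. The contrast with the routing bound survives either way, since this lower bound does not depend on $\tau$ at all.
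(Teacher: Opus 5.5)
Your routing argument is the paper's: factor the rank-one Frobenius norm, bound $\|\vk_s\|,\|\vh_t\|$ by ad hoc constants $B_K,B_h$, and invoke Theorem~\ref{thm:gradient_locality}. Two details of the paper's version favour yours. It writes $\|\vk_s\|\,\|\vh_t\|$ outside the expectation as if deterministic, so your almost-sure bounds are the correct form. It also silently applies the averaged $g(\tau)$ to a single pair, as you noticed.

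However, the pointwise re-run you sketch cannot supply the per-pair bound. Lemma~\ref{lemma:spectral-correclation-decay} controls a signed quantity $|\E[f(z_t)g(z_s)]|$, while the target is $\E\bigl[|\partial\Ls/\partial e_{ts}|\bigr]$. Worse, once you condition on $x_{1:t}$ as planned, Assumption~\ref{asm:score} gives $\E[\partial\Ls/\partial e_{ts}]=\E\bigl[A_{ts}\,\E[\mG_t\mid x_{1:t}]^{\top}(\vv_s-\bvbar_t)\bigr]=0$ at every $\tau$, because causality makes $A_{ts},\vv_s,\bvbar_t$ functions of $x_{1:t}$. So the latent chain never enters, and the absolute value is never controlled. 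The paper's proof of Theorem~\ref{thm:gradient_locality} makes the same $\E|\cdot|$ versus $|\E\,\cdot|$ conflation. For this statement you can do no better than the paper: cite that theorem's per-pair bound as given.

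On extraction, the paper's proof is one unjustified line, $\E\|\sum_{t\ge s}A_{ts}\mG_t\|\ge\E[A_{ss}\|\mG_s\|]$. It simply ignores the cancellation you worried about. Your diagonal reinterpretation proves a weaker statement, and your $L^2$ route stalls. The missing idea is conditional Jensen rather than orthogonality.

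Let $\mathcal{F}_s=\sigma(x_{1:s+1})$. For $t>s$, $A_{ts}$ is $x_{1:t}$-measurable and $\mathcal{F}_s\subseteq\sigma(x_{1:t})$, so $\E[A_{ts}\mG_t\mid\mathcal{F}_s]=\E\bigl[A_{ts}\,\E[\mG_t\mid x_{1:t}]\bigm|\mathcal{F}_s\bigr]=0$. Hence $\E[\partial\Ls/\partial\vv_s\mid\mathcal{F}_s]=A_{ss}\,\E[\mG_s\mid\mathcal{F}_s]$, and convexity of the norm gives
\[
\E\Bigl\|\frac{\partial\Ls}{\partial\vv_s}\Bigr\|\ \ge\ \E\bigl[A_{ss}\,\|\E[\mG_s\mid\mathcal{F}_s]\|\bigr]\ \ge\ \delta_A\,\E\|\E[\mG_s\mid\mathcal{F}_s]\|.
\]
Take $\mG_s$ to be the next-token gradient $\mJ_{\mathrm{LN}}^{\top}\mW_{\mathrm{lm}}^{\top}(\vp_s-\vy_s)$, as in the paper's justification of Assumption~\ref{asm:norm}. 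Then $\mG_s$ is $\mathcal{F}_s$-measurable and the right side is at least $\delta_A c_G$. That is the statement exactly, in $L^1$, with no second-moment reading of $c_G$.

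In a deep model, $\mG_s$ also carries later-position losses and is no longer $\mathcal{F}_s$-measurable. There the hypothesis must instead be that $c_G$ lower-bounds $\E\|\E[\mG_s\mid\mathcal{F}_s]\|$. This is the precise form of the ``conditional sense'' your fallback gestures at.
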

The detailed proof is in \autoref{appdix:proof-re-gradient-decouple}. Therefore, CoT-SFT affects routing and knowledge extraction differently. $\mW_Q, \mW_K$ receive meaningful gradient signal only from short-range pairs, while the gradient reaching $\mW_V$ is bounded below uniformly over all positions, independent of context length. 

\subsection{\textsc{QK-Restore}}
Theorem \ref{theom:RE-gradient-decouple} shows that CoT-SFT affects the two functional components of a softmax-attention layer in fundamentally different ways. 

The routing parameters $\mW_Q$ and $\mW_K$, which control {where} the model attends, receive gradient signal that decays as $\rho^{\tau}$ with distance: they are shaped exclusively by short-range pairs and progressively lose long-range routing capacity. The extraction parameters $\mW_V$ and $\mW_O$, which control what is retrieved, receive a gradient uniformly bounded below for every position: they accumulate value-processing improvements.

In other words, CoT-SFT simultaneously {corrupts} $\mW_Q, \mW_K$ and {improves} $\mW_V, \mW_O$. These effects are segregated into disjoint parameter sets. We propose \method, which eliminates routing corruption and preserves extraction
improvement by transplanting $\mW_Q, \mW_K$ from the pre-SFT checkpoint while retaining post-SFT $\mW_V$ and $\mW_O$ (Algorithm~\ref{alg:graft}).
\vspace{-2mm}
 

\begin{algorithm}[t]
\caption{\method}
\label{alg:graft}
\begin{algorithmic}
\STATE {\bfseries Input:} Hybrid model's pre-SFT checkpoint $\vtheta_{\mathrm{pre}}$; Hybrid model's post-SFT checkpoint $\vtheta_{\mathrm{post}}$; softmax-attention layers ${L}_{\mathrm{attn}}$
\STATE{\bfseries Output:}  Repaired model $\vtheta_{\mathrm{rep}}$ 

\STATE \textcolor{gray}{\# \textit{Initialized from post-SFT weights}}
\STATE  $\vtheta_{\mathrm{rep}} \leftarrow \vtheta_{\mathrm{post}}$

\FOR{$\ell \in {L}_{\mathrm{attn}}$}
\STATE \textcolor{gray}{\# \textit{Replace with pre-SFT weights}}
    \STATE $\mW_Q^{\ell,\mathrm{rep}} \leftarrow \mW_Q^{\ell,\mathrm{pre}}, \; \mW_K^{\ell,\mathrm{rep}} \leftarrow \mW_K^{\ell,\mathrm{pre}}$
\ENDFOR

\RETURN $\vtheta_{\mathrm{rep}}$
\end{algorithmic}
\end{algorithm}

\section{Experiments}

\subsection{Setup}
\noindent\textbf{Models.} HypeNet \citep{chen2026hybridlinearattentionright} has recently presented strong long-context recall performance, therefore we focus mainly on this model at scales from 2B to 9B. To ensure a comprehensive analysis, we also include Jet-Nemotron-2B\footnote{\url{https://huggingface.co/collections/jet-ai/jet-nemotron}}.

\noindent\textbf{Training.} To investigate the influence of CoT-SFT, we primarily focus on the math domain, which is widely studied and has sufficient high-quality datasets \citep{ultradata-math, mitra2024orcamath, yu2025dapoopensourcellmreinforcement}. We train HypeNet in different scales on our own in both pre-training and SFT stage, and apply SFT to Jet-Nemotron from their pre-trained checkpoints. More details of datsets and training configuration  are in \autoref{appdix:training-details}. 


\noindent\textbf{Evaluation.} We evaluate the model's performance on long-context recall and math reasoning tasks. For long-context recall, we report the accuracy on NIAH. To measure math reasoning, we test the model on GSM8K \citep{cobbe2021gsm8k} and MATH500 \citep{lightman2023letsverifystepstep}.  


\begin{table*}[!t]
\centering
\caption{NIAH and Math performance comparisons across models after CoT-SFT on the Miromind dataset. {$\uparrow(\downarrow)$} indicates the improvement (degradation) compared to SFT model.} 
\label{tab:exp-results}
\resizebox{0.999\linewidth}{!}{%
\begin{tabular}{ll|llll|llll|llll| cc|cc|cc}
\toprule
\textbf{\textsc{Model}}   & \textbf{\textsc{Method}} & \multicolumn{4}{c}{\textbf{\textsc{NIAH-Single-1}}}  & \multicolumn{4}{c}{\textbf{\textsc{NIAH-Single-2}}}    & \multicolumn{4}{c}{\textbf{\textsc{NIAH-Single-3}}}   & \multicolumn{2}{c}{\textbf{MATH500}} & 
\multicolumn{2}{c}{\textbf{GSM8K}} &\multicolumn{2}{c}{\textbf{AIME24}}  \\
& & \textbf{32K} & \textbf{64K} & \textbf{128K}& \textbf{256K} &  \textbf{32K} & \textbf{64K} & \textbf{128K} & \textbf{256K} & \textbf{32K} & \textbf{64K} & \textbf{128K} & \textbf{256K}  & \texttt{Avg@16} & \texttt{Maj@16} & \texttt{Avg@16} & \texttt{Maj@16} & \texttt{Avg@16} & \texttt{Maj@16}
\\ \midrule

Jet-Nemotron-2B & Pre-train & 100.0	&99.8	&100.0	&100.0 &95.6	&\textbf{91.0}	&\textbf{61.0}	&\textbf{22.2} & \textbf{57.2} &\textbf{48.0}	&\textbf{73.6}	&\textbf{40.6} & 37.5 & 64.0 & 52.6 & 84.3 & 0.80 & 3.30\\
& +SFT &  100.0	&100.0	&100.0	&98.6 &57.8	&28.4	&21.0	&11.6 & 34.4 &33.6	&67.2	&27.8 & \textbf{49.4} & \textbf{70.6} & \textbf{71.9} & 88.0 & 1.50 & 3.30 \\
& +\textbf{\method} & 100.0 & 100.0 & 100.0 & 98.6 & 64.6 & 34.0 & 22.0 & 7.60 & 41.8\color{red}\textsubscript{(7.4$\uparrow$)} & 39.0\color{red}\textsubscript{(5.4$\uparrow$)} & 64.4\color{ForestGreen}\textsubscript{(2.8$\downarrow$)} & 34.0\color{red}\textsubscript{(6.2$\uparrow$)} & 49.0& 69.4 &71.2 & \textbf{88.6} & \textbf{1.90} & \textbf{6.70}\\ \midrule

HypeNet-2B & Pre-train & 99.2	&97.6	&97.2	&98.4 & 75.8	&\textbf{94.0}	&\textbf{88.0}	&\textbf{56.2} & 48.8	&34.8	&30.4	&23.4 & 4.70 & 25.4 & 2.20 & 20.0 & 0.00 & 0.00\\

& +SFT & \textbf{99.8}	&\textbf{99.8}	&99.2	&99.6 & 82.2	&81.8&48.4&41.8 & 47.4	&32.6	&7.60	&10.6 & \textbf{34.4} & 55.2 & 40.8 & 68.1 & 0.00 & 0.00\\
& +\textbf{\method} & 99.4	&99.6	&\textbf{99.4} & \textbf{99.8} & \textbf{97.6}	&91.2	&83.2 & 40.8 & \textbf{56.2}\color{red}\textsubscript{(8.8$\uparrow$)}	&\textbf{52.8}\color{red}\textsubscript{(20.2$\uparrow$)}	&\textbf{30.8}\color{red}\textsubscript{(23.2$\uparrow$)} &\textbf{30.2}\color{red}\textsubscript{(19.6$\uparrow$)} & 33.7 & \textbf{56.6}&  \textbf{41.2} & \textbf{69.0} & \textbf{0.20} & \textbf{3.30}\\ \midrule

HypeNet-5B & Pre-train & 100.0 & 100.0 & 100.0 & 100.0 & 99.4	&99.8	&\textbf{97.0}	&\textbf{93.4} &95.0	&94.2	&87.2	&75.2 & 4.70  & 33.8 & 4.10 & 39.5 & 0.00 & 0.00\\
& +SFT &  100.0 & 100.0 & 100.0 & 100.0 &98.8	&100.0	&91.4	&83.6 & 96.6	&\textbf{96.6}	&\textbf{90.6}	&65.4 & \textbf{49.3} & 67.0 & \textbf{61.3} & 81.4 & 0.60 & 0.00\\
& +\textbf{\method} &100.0 & 100.0 & 100.0 & 100.0 &\textbf{99.8} &\textbf{100.0}	&96.8	&85.4 & \textbf{97.0}\color{red}\textsubscript{(0.4$\uparrow$)}		&96.4\color{ForestGreen}\textsubscript{(0.2$\downarrow$)}		&90.2\color{ForestGreen}\textsubscript{(0.4$\downarrow$)}	&\textbf{76.4}\color{red}\textsubscript{(11.0$\uparrow$)}	 & 48.7 & 68.4 &  60.9 & \textbf{81.8} & 0.40 & \textbf{3.30}\\ \midrule

HypeNet-9B & Pre-train & 100.0 & 100.0 & 100.0 & 100.0 & 64.2	&95.8	&\textbf{72.4} &  \textbf{67.2} & 53.0	&95.0	&\textbf{83.8} &\textbf{52.0} & 17.6 & 57.6 & 15.3 & 62.4 & 0.80 & 6.70\\
& +SFT & 100.0 & 100.0 & 100.0 & 100.0 & 89.0	&64.2	&17.4 & 9.40 & 98.2	&90.0	&47.8 & 22.8 & \textbf{39.8} & \textbf{69.5}  & \textbf{62.3} & 87.8 & \textbf{1.20} & \textbf{10.0}\\
& +\textbf{\method }&100.0 & 100.0 & 100.0 & 100.0 & \textbf{92.6}	&\textbf{100.0}	&44.0 & 19.6 & \textbf{98.6}\color{red}\textsubscript{(0.4$\uparrow$)}	&\textbf{97.0}\color{red}\textsubscript{(7.0$\uparrow$)}	&66.4\color{red}\textsubscript{(18.6$\uparrow$)} & 42.6\color{red}\textsubscript{(19.8$\uparrow$)} & 37.6 & 68.4 & 59.3 & \textbf{88.4} & 0.60 & {3.30}

\\\bottomrule
\end{tabular}
}
\end{table*}

\subsection{Main Results}
The performance comparisons are in \autoref{tab:exp-results}.

\noindent\textbf{CoT-SFT drives long-context degradation.}
A key observation is that CoT-SFT often weakens long-context retrieval abilities acquired during pre-training, particularly under more challenging retrieval settings and longer context windows. While simpler tasks such as NIAH-Single-1 remain nearly saturated, substantial degradation emerges on NIAH-Single-2 and NIAH-Single-3.

For example in \autoref{tab:exp-results}, HypeNet-2B on NIAH-Single-3 at 128K drops from $30.4$ to $7.60$ after SFT, while HypeNet-9B on NIAH-Single-2 at 256K decreases from $67.2$ to $9.40$. Moreover, the degradation consistently becomes more severe as context length increases, suggesting that SFT disproportionately disrupts the mechanisms responsible for long-range retrieval.


\noindent\textbf{\textsc{QK-Restore} recovers long-context capability.}
\textsc{QK-Restore} consistently mitigates the loss of long-context performance incurred during SFT, with the largest gains observed in configurations where the degradation is most severe.

Specifically in \autoref{tab:exp-results}, for HypeNet-2B, NIAH-Single-3 at 256K improves from $10.6$ to $30.2$ (\textbf{+19.6}), while HypeNet-9B increases from $22.8$ to $42.6$ (\textbf{+19.8}). Under OpenThoughts-3 CoT-SFT, HypeNet-9B improves from $22.4$ to $45.0$ on NIAH-Single-2 at 256K and from $40.6$ to $61.2$ on NIAH-Single-3 at 256K. In several configurations, \method even surpasses the original pre-training baseline. For example, HypeNet-2B on NIAH-Single-3 at 64K improves from $34.8$ (pre-train) to $52.8$. This suggests that our approach does not simply recover pre-training behavior; rather, it can preserve retrieval structures from pre-training while still benefiting from task-specific representations introduced during SFT.

\noindent\textbf{\textsc{QK-Restore} preserves downstream reasoning performance.}
Beyond retrieval, we evaluate whether restoring routing parameters affects downstream reasoning capabilities. Across benchmarks, \textsc{QK-Restore} largely preserves the gains obtained from CoT-SFT while substantially improving long-context retrieval.

On mathematical reasoning tasks, performance remains close to the post-SFT model. For example, on HypeNet-5B, \textsc{QK-Restore} incurs only minor changes of $-0.6$ and $-0.4$ points on MATH500 and GSM8K, respectively, while recovering long-context retrieval performance. Similar trends are observed across other model scales. These results suggest that routing parameters can be selectively restored to recover long-context retrieval without significantly affecting the task-specific knowledge and reasoning capabilities acquired during post-training.

\noindent\textbf{Retrieval difficulty amplifies long-context degradation.}
We further observe that retrieval complexity strongly influences robustness. NIAH-Single-1 remains nearly saturated across training stages, whereas substantially larger gaps emerge in Single-2 and Single-3. This trend indicates that difficult retrieval settings place greater demands on precise long-range token interactions and are therefore more sensitive to the degradation.

\begin{table*}[!t]
\centering
\caption{NIAH comparisons across models after Instruction-following (non-CoT) SFT on the Tulu-3 dataset.} 
\label{tab:exp-results2}
\resizebox{0.999\linewidth}{!}{%
\begin{tabular}{ll|llll|llll|llll|c}
\toprule
\textbf{\textsc{Model}}   & \textbf{\textsc{Method}} & \multicolumn{4}{c}{\textbf{\textsc{NIAH-Single-1}}}  & \multicolumn{4}{c}{\textbf{\textsc{NIAH-Single-2}}}    & \multicolumn{4}{c}{\textbf{\textsc{NIAH-Single-3}}} & \textbf{IFEval}    \\
& & \textbf{32K} & \textbf{64K} & \textbf{128K}& \textbf{256K} &  \textbf{32K} & \textbf{64K} & \textbf{128K} & \textbf{256K} & \textbf{32K} & \textbf{64K} & \textbf{128K} & \textbf{256K} 
\\ \midrule

HypeNet-5B & Pre-train & 100.0 & 100.0 & 100.0 & 100.0 & 99.4	&99.8	&{97.0}	&{93.4} &95.0	&94.2	&\textbf{87.2}	&75.2 & 12.0\\
& +SFT &  99.8 & 100.0 & 100.0 & 100.0 & \textbf{99.8} & {100.0} & 99.6 & \textbf{93.6} & \textbf{98.6} & \textbf{97.2} & 85.4 & 83.8 & 26.6\\
& +\textbf{\method} & 100.0 & 100.0 & 100.0 & 100.0 & 99.6 & \textbf{100.0} & \textbf{99.8} & 93.4 & 97.4 & 95.2 & 85.4 & \textbf{85.2} & \textbf{26.6}\\ \midrule

HypeNet-9B & Pre-train & 100.0 & 100.0 & 100.0 & 100.0 & 64.2	&95.8	&{72.4} &  {67.2} & 53.0	&95.0	&{83.8} &{52.0} & 14.0\\
& +SFT & 100.0 & 100.0 & 100.0 & 100.0 & 86.4 & \textbf{97.8} & 88.2 & 78.4 & \textbf{96.2} & 96.8 & \textbf{95.8} & 86.8 & 27.2\\
& +\textbf{\method } & 100.0 & 99.8 & 100.0 & 100.0 & \textbf{88.2} & 97.4 & \textbf{92.8} & \textbf{78.6} & 96.0 & \textbf{97.4} & 95.2 & \textbf{88.2} & \textbf{29.0}

\\\bottomrule
\end{tabular}
}
\end{table*}

\section{Analysis}
\subsection{Analysis on Non-CoT SFT}
Our theoretical analysis attributes long-context degradation to the localized optimization dynamics induced by CoT supervision. A natural question is whether this phenomenon arises from post-training in general, or whether it is specific to CoT-style reasoning traces. To answer this question, we compare CoT-SFT with conventional instruction-following SFT using the Tulu-3 dataset.

As shown in \autoref{tab:exp-results2}, standard instruction tuning does not exhibit the same degradation pattern observed under CoT-SFT. Across both HypeNet-5B and HypeNet-9B, long-context retrieval performance is largely preserved and frequently improved after Tulu-3 training. For example, HypeNet-5B improves from $75.2$ to $83.8$ on NIAH-Single-3 at 256K, while HypeNet-9B improves from $52.0$ to $86.8$. Similar gains are observed on NIAH-Single-2 across multiple context lengths.

Moreover, instruction-following ability improves substantially after Tulu-3 tuning, as measured by IFEval. Despite these improvements, long-context retrieval remains intact. This contrasts sharply with the behavior of CoT-SFT, where retrieval performance deteriorates under identical architectures and context lengths.

These findings suggest that long-context degradation is not an inherent consequence of post-training. Instead, it is closely associated with CoT-style supervision, supporting our hypothesis that the sequential dependency structure of reasoning traces concentrates gradient updates on short-range interactions and disproportionately affects long-range routing behavior.

\begin{tcolorbox}[
 width=\columnwidth,
 title=\textsc{Key Takeaway-2},
 colback=purple!5!white,
 colframe=red!75!black
]
Long-context degradation is not an inherent consequence of post-training, where non-CoT instruction-following SFT preserves or even improves retrieval for hybrid models.
\end{tcolorbox}

\begin{table*}[!t]
\centering
\caption{NIAH comparisons across models after CoT-SFT on the OpenThoughts-3 dataset.} 
\label{tab:exp-results3}
\resizebox{0.999\linewidth}{!}{%
\begin{tabular}{ll|llll|llll|llll|c}
\toprule
\textbf{\textsc{Model}}   & \textbf{\textsc{Method}} & \multicolumn{4}{c}{\textbf{\textsc{NIAH-Single-1}}}  & \multicolumn{4}{c}{\textbf{\textsc{NIAH-Single-2}}}    & \multicolumn{4}{c}{\textbf{\textsc{NIAH-Single-3}}} & \textbf{LCB-V5}   \\
& & \textbf{32K} & \textbf{64K} & \textbf{128K}& \textbf{256K} &  \textbf{32K} & \textbf{64K} & \textbf{128K} & \textbf{256K} & \textbf{32K} & \textbf{64K} & \textbf{128K} & \textbf{256K} & \texttt{Avg@8}
\\ \midrule

HypeNet-5B & Pre-train & 100.0 & 100.0 & 100.0 & 100.0 & 99.4	&99.8	&{97.0}	&\textbf{93.4} &95.0	&94.2	&{87.2}	&75.2 & 0.27\\
& +SFT &  100.0 & 100.0 & 99.8 & 100.0 & 99.8 & 100.0 & 99.0 &  89.6 & \textbf{98.8} & \textbf{97.8} &\textbf{94.2} & \textbf{91.4} & 6.37 \\
& +\textbf{\method} & 100.0 & 99.8 & 99.8 & 100.0 & \textbf{99.8} & \textbf{100.0} & \textbf{100.0} & 92.8 & 98.6 & 97.4 & 93.2 & 79.4 & \textbf{6.92}\\ \midrule

HypeNet-9B & Pre-train & 100.0 & 100.0 & 100.0 & 100.0 & 64.2	&95.8	&{72.4} &  \textbf{67.2} & 53.0	&95.0	&{83.8} &{52.0} & 1.66 \\
& +SFT &  100.0 & 100.0 & 100.0 & 100.0 & 91.8 & 100.0 & 41.8 & 22.4 & \textbf{96.2} & \textbf{98.6} & 76.6 & 40.6 & \textbf{13.11}\\
& +\textbf{\method } & 100.0 & 100.0 & 100.0 & 99.8 & \textbf{94.2} & 99.8 & \textbf{78.2} & 45.0 & 95.0 & 97.4 & \textbf{90.8} & \textbf{61.2} & 12.91

\\\bottomrule
\end{tabular}
}
\end{table*}

\subsection{Analysis Beyond Mathetical Reasoning}
Our primary experiments focus on mathematical reasoning, where high-quality CoT supervision is readily available. To evaluate whether the observed retrieval–reasoning trade-off extends beyond mathematics, we conduct additional experiments using OpenThoughts-3 and evaluate downstream coding performance on LiveCodeBench-V5.

As shown in \autoref{tab:exp-results3}, the same qualitative behavior persists in the coding domain. CoT-SFT substantially improves coding capability, increasing LiveCodeBench-V5 from $0.27$ to $6.37$ on HypeNet-5B and from $1.66$ to $13.11$ on HypeNet-9B. However, these gains are accompanied by noticeable degradation in long-context retrieval, particularly on challenging NIAH settings and long context lengths.

Applying \textsc{QK-Restore} recovers a large fraction of the lost retrieval capability while preserving downstream coding performance. For example, on HypeNet-5B, LiveCodeBench-V5 further improves from $6.37$ to $6.92$, while NIAH-Single-3 at 256K recovers from $75.2$ to $79.4$. Similarly, HypeNet-9B improves from $40.6$ to $61.2$ on NIAH-Single-3 at 256K while maintaining competitive coding performance ($13.11 \rightarrow 12.91$).

These results suggest that the retrieval degradation induced by CoT-SFT is not specific to mathematical reasoning. Instead, it appears to arise from a more general optimization phenomenon associated with long reasoning traces. The effectiveness of \textsc{QK-Restore} across both math and code domains further supports our central hypothesis that long-range retrieval is primarily governed by routing parameters, while task-specific reasoning capabilities are largely encoded elsewhere in the network.

\subsection{Analysis on Attention Routing Recover}

In this section, we visualize the attention map to characterize how CoT supervised fine-tuning alters the long-range routing behavior of the softmax-attention layers in HypeNet-5B. We conduct the analysis on the models trained on Miromind dataset.

In detail, we compare the attention matrices of a pre-SFT and a post-CoT-SFT checkpoint of HypeNet-5B across all softmax-attention layers. We then identify the most affected layer via a per-head mean attention distance metric:
\begin{align}
    \bar{d}_h = \mathbb{E}_{t}\!\left[\sum_{s\leq t} (t{-}s)\,A_{h,t,s}\right],
\end{align}
where $h$ indexes the attention head and $A_{h,t,s}$ is the post-softmax weight from query $t$ to key $s$. Ranking layers by the number of heads whose $\bar{d}_h$ decreases after SFT, Layer 33 emerges as the most affected, with 25 of 32 heads shifting toward shorter attention distances. We therefore focus our analysis on this layer.

\begin{figure*}[!t]
    \centering
    \includegraphics[width=0.99\linewidth]{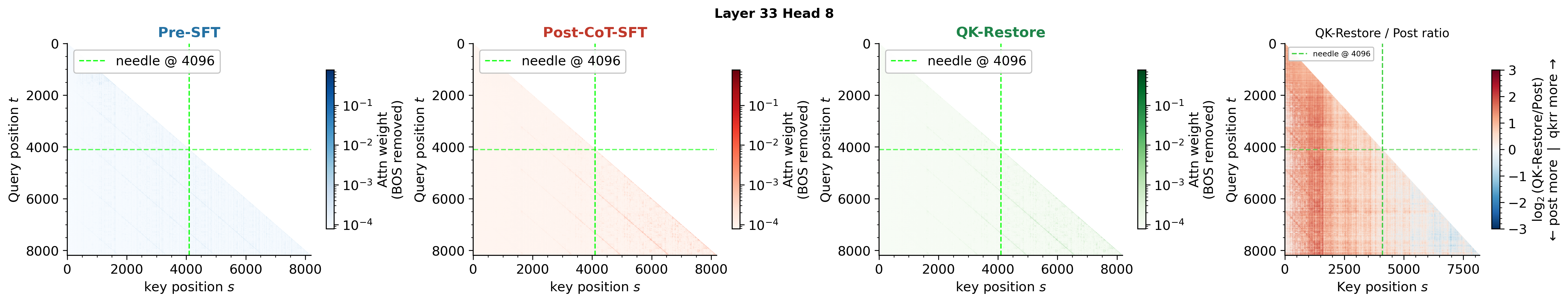}
    \caption{Attention map from query $t$ to key $s$ of \texttt{layer33-head8} in pre-SFT and post-SFT on HypeNet-5B. Particularly, panel 4 demonstrates post-SFT assigning higher weights to nearby tokens, which is consistent with our Theorem, and the effectiveness of \method in restoring the long-range attention behavior.}
    \label{fig:attention-map}
    \vspace{-1mm}
\end{figure*}

We visualize the top recall heads on a synthetic NIAH prompt of 8,192 tokens, with the target fact embedded at $50\%$ context depth. For each head, we display a four-panel figure: the pre-SFT attention matrix $\tilde{A}_\text{pre}$ (blue), the post-SFT matrix $\tilde{A}_\text{post}$ (red), the \method matrix $\tilde{A}_\text{QK-Restore}$ (green), and the element-wise log-ratio $\log_2(\tilde{A}_\text{QK-Restore} / \tilde{A}_\text{post})$ (Panel 4), where positive values indicate positions where \method recovers more long-range attention than post-SFT.

\noindent\textbf{Analysis.} \autoref{fig:attention-map} shows the pattern of Layer-33 Head-8 of HypeNet-5B. From panel 4, we observe that the off-diagonal lower triangle is predominantly red, confirming that \method partially recovers the long-range routing suppressed by CoT fine-tuning: for most query positions $t$, \method attends more strongly to distant keys $s \ll t$ than post-SFT does. Second, the lower-right block, where both the query and key positions fall after the needle $s^*$, shows a blue band along the main diagonal, meaning that post-SFT assigns higher weight to nearby tokens in this region, which matches our hypothesis. These observations demonstrate the effectiveness of \method in mitigating routing collapse and restoring the long-range attention behavior characteristic.

\subsection{Ablation on \method}
\method transplanting the full $\mW_Q, \mW_K$ matrices from the pre-SFT checkpoint into the post-SFT model, recovers long-range recall substantially but incurs a math performance drop. The post-SFT $\mW_Q$ can be decomposed as:
\begin{equation*}
    \mW_Q^{\rm post} = \mW_Q^{\rm pre} + \delta \mW_Q^{\rm route} + \delta \mW_Q^{\rm math},
\end{equation*}
where $\delta \mW_{\rm route}$ encodes harmful locality drift and $\delta \mW_{\rm math}$ encodes beneficial math ability adaptation. Since both components are entangled within $\mW_Q$, \method discards them together, recovering routing at the cost of erasing the math benefit. As expected, \autoref{fig:routing-drift} and \autoref{fig:routing-drift-jet} show a non-negligible drift in $\mR=\mW_Q\mW_K^{\top}$, where $\Delta \mR = \mR^{\rm post}-\mR^{\rm pre}$. 

\begin{figure}[t]
    \centering
    \includegraphics[width=0.5\linewidth]{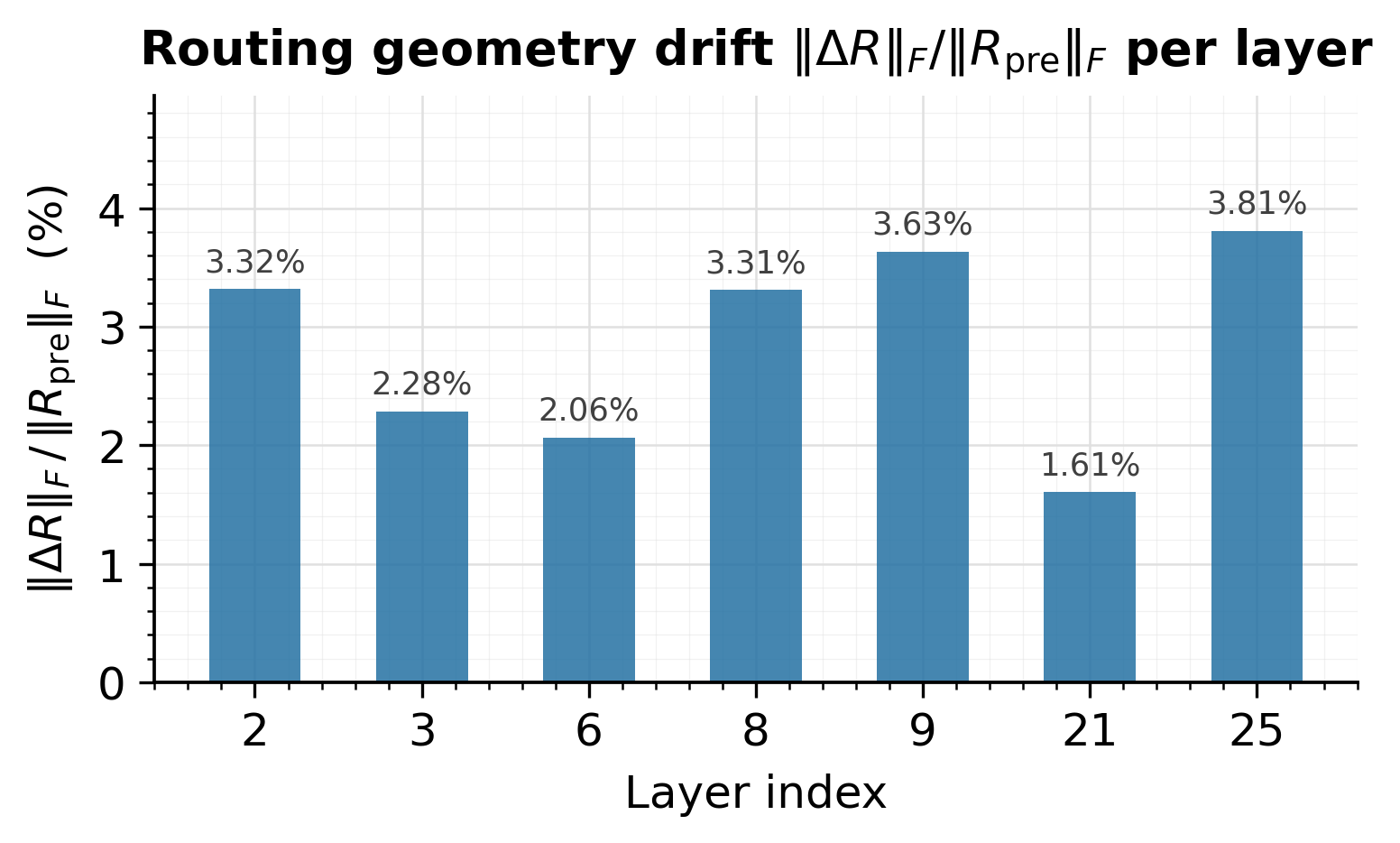}
    \caption{Routing geometry drift in HypeNet-2B.}
    \label{fig:routing-drift}
    \vspace{-2.5mm}
\end{figure}

\begin{figure}[t]
    \centering
    \includegraphics[width=0.5\linewidth]{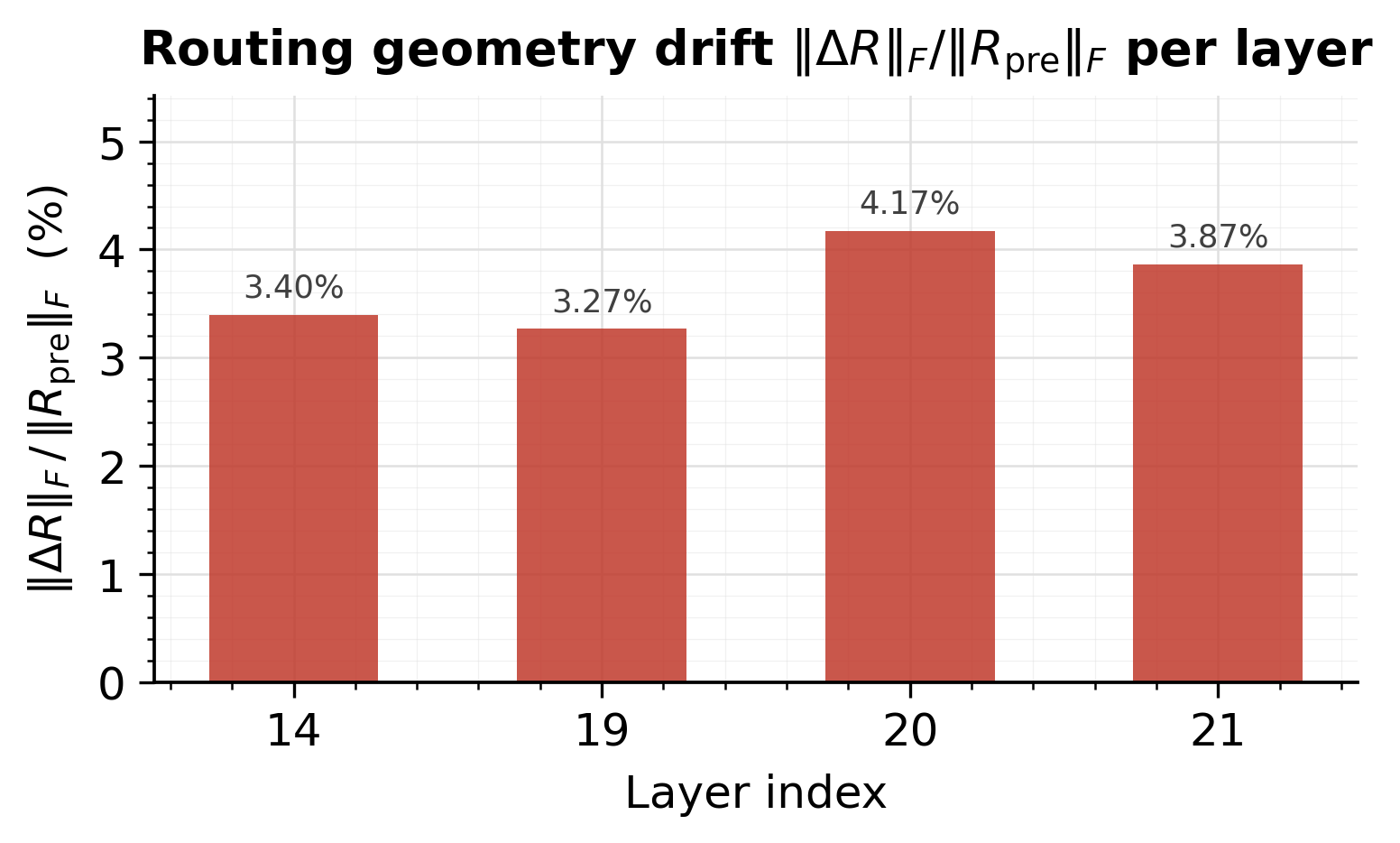}
    \caption{Routing geometry drift in Jet-Nemotron-2B.}
    \label{fig:routing-drift-jet}
    \vspace{-2.5mm}
\end{figure}

Therefore, we model it as a Procrustes problem:
\begin{equation}
\begin{adjustbox}{max width=0.99\linewidth}
$
\begin{aligned}
\min_{\mW_Q^{\rm new},\mW_K^{\rm new}} \quad
\left\|\mW_Q^{\rm new}-\mW_Q^{\rm post}\right\|_F^2 &+
\left\|\mW_K^{\rm new}-\mW_K^{\rm post}\right\|_F^2 \\
\text{s.t.} \
\mW_Q^{\rm new}\mW_K^{{\rm new}\top}
&=
\mR_{\rm pre},
\end{aligned}
$
\end{adjustbox}
\end{equation}
and we call it as \textsc{QK-Pro} variant (more details are in \autoref{appdix:graft-pro}). 

Furthermore, we evaluate the roles of each components in attention (i.e., $\mW_K,\mW_Q,\mW_V$), and CoT-SFT with $\mW_Q^{\rm pre}, \mW_K^{\rm pre}$ frozen as a preventive counterpart (denoted as \textsc{QK-Frozen}), to answer a question: is it preferable to prevent routing drift during training, or to correct it post-hoc?

\begin{table}[!t]
\centering
\caption{Ablation of \method in HypeNet-2B.} 
\label{tab:ablation-qk}
\resizebox{0.6\linewidth}{!}{%
\begin{tabular}{l|lllll}
\toprule
 \textbf{\textsc{Method}} & \multicolumn{2}{c}{\textbf{\textsc{NIAH-Single-2}}}    & \multicolumn{2}{c}{\textbf{\textsc{NIAH-Single-3}}} & \textbf{MATH500}  \\
 &  \textbf{64K} & \textbf{128K}  & \textbf{64K} & \textbf{128K} & \texttt{Avg@16}
\\ \midrule
 \method &\textbf{91.2} & \textbf{83.2} & \textbf{52.8} & \textbf{30.8} & 33.7 \\ \midrule
\textsc{QK-Pro} & 94.0 & 79.0 & 51.8 & 26.0 & \textbf{34.2} \\ 
\textsc{Q-Restore} & 78.4 & 79.0 & 50.8 & 24.4 & 34.2 \\
\textsc{K-Restore} & 90.4 & 71.6 & 46.2 & 13.2 & 34.2\\
\textsc{V-Restore} & 57.2 & 36.6 & 24.0 & 15.2 & 32.1 \\
\textsc{QK-Frozen} & 90.6 & 76.4 & 49.4 & 23.6 & 33.1
\\\bottomrule
\end{tabular}
}
\end{table}

\noindent\textbf{Analysis.} \autoref{tab:ablation-qk} validates three key claims. \textsc{V-Restore} fails to recover NIAH while degrading MATH500, confirming that $\mW_V$ does not contribute to routing recovery. \textsc{Q-Restore} and \textsc{K-Restore} each achieve partial recall recovery but fall significantly short at 128K (e.g., \textsc{K-Restore} collapses on NIAH-Single-3 (13.2 vs.\ 30.8), indicating that isolated restoration introduces a $\mW_Q$-$\mW_K$ mismatch and that joint restoration is necessary for coherent routing geometry. \textsc{QK-Pro} achieves the highest performance on MATH500, but trading long-range routing recovery for math preservation. \textsc{QK-Frozen} consistently underperforms \method in both NIAH and mathematical tasks, indicating that unconstrained SFT is for better $\mW_V$ adaptation.

\section{Conclusion}
\vspace{-1mm}

In this work, we identify CoT-SFT-induced long-context recall degradation as a critical post-training failure mode in distilled hybrid models. We show that reasoning-oriented SFT biases optimization toward short-range attention patterns, perturbing the query-key routing geometry of retained softmax-attention layers and weakening long-range retrieval. Motivated by this observation, we propose \textsc{QK-Restore}, a training-free method that restores only the query and key projections from the pre-SFT checkpoint while preserving all other parameters. Experiments across multiple architectures show that \textsc{QK-Restore} substantially recovers long-context capability while largely preserving SFT reasoning gains. Overall, our findings highlight routing stability as a key factor for efficient long-context models that jointly support retrieval and reasoning.

\clearpage 

\bibliography{custom}
\bibliographystyle{abbrvnat}

\clearpage

\appendix
\section{Discussion of the CoT-Markov Structure Assumption}
\label{appdix:discuss-cot}

\noindent\textbf{Discussion 1.} The 1-step Markov condition is an idealisation for analysis. Any $k$-step Markov chain is equivalent to a 1-step chain on a $K^k$-state product space, yielding the same exponential decay with a larger reasoning horizon $W$. 

\noindent\textbf{Discussion 2.}  Reversibility is invoked in the proof of Lemma \ref{lemma:spectral-correclation-decay} at one step: it implies that the Markov operator $P$ is self-adjoint on $L^2(\pi)$. We acknowledge that a causally generated sequence such as a CoT reasoning chain does not, in general, satisfy detailed balance: the latent transition probabilities are asymmetric by construction. Reversibility is, however, a \emph{sufficient} condition rather than a necessary one. For a general ergodic chain (possibly non-reversible), the identical bound holds with $\rho$ replaced by the second-largest \emph{singular value} of $P$, obtained
via its singular value decomposition in place of the eigendecomposition. Since singular values are always real and non-negative, the proof proceeds without modification, and the form of the final bound is unchanged. We therefore state Lemma \ref{lemma:spectral-correclation-decay} under reversibility for clarity of exposition, with the understanding that all subsequent results hold for the general ergodic case by replacing eigenvalues with singular values throughout.


\section{Justification for Norm Bounds Assumption}
\label{appdix:norm-bound-assumption}
\noindent\textbf{Bound on $\|\vv_s\|$.} The value vector is $\vv_s = \mW_V \operatorname{LN}(\vh_s)$, where $\operatorname{LN}$ denotes
layer normalisation.  Layer normalisation standardises each coordinate so that
$\operatorname{LN}(\vh_s)_i = \gamma_i (\vh_{s,i} - \hat\mu)/\hat\sigma + \beta_i$,
where $\hat\mu$ and $\hat\sigma$ are the empirical mean and standard deviation of the entries of $\vh_s$.  After normalisation, the pre-affine vector satisfies
$\sum_i \vz_i^2 = d$, giving
\begin{align}
  \|\operatorname{LN}(\vh_s)\|^2
  &= \sum_i (\gamma_i \vz_i + \beta_i)^2
  \\
  &\leq 2\!\left(\|\gamma\|_\infty^2 d + \|\beta\|^2\right)
  \\
  &\leq C_{\mathrm{LN}}^2 d,
\end{align}
for a constant $C_{\mathrm{LN}}$ depending only on the learned scale $\gamma$ and bias $\beta$.   The projection matrix $\mW_V$ is a fixed trained weight; its operator norm
$\|\mW_V\|_{\mathrm{op}} < \infty$ is a finite model-dependent constant.
Combining we have:
\begin{align}
  \|\vv_s\|
  &= \|\mW_V \operatorname{LN}(\vh_s)\|
  \\
  &\leq \|\mW_V\|_{\mathrm{op}}\, C_{\mathrm{LN}} \sqrt{d}
  =: B_V.
\end{align}

\noindent\textbf{Bound on $\mG_t$}. The gradient $\mG_t = \partial\mathcal{L}/\partial \vo_t$ back-propagates through the
final layer normalisation (with Jacobian $\mJ_{\mathrm{LN}}$) and the language-model
head $\mW_{\mathrm{lm}}$, giving
\begin{align}
  \mG_t = \mJ_{\mathrm{LN}}^T\, \mW_{\mathrm{lm}}^T\,(\vp_t - \vy_t),
\end{align}
where $\vp_t \in \Delta^{|\mathcal{V}|-1}$ is the predicted distribution and $\vy_t \in \{0,1\}^{|\mathcal{V}|}$ is the one-hot target.
By submultiplicativity of the operator norm:
\begin{align}
  \|\mG_t\|
  \leq \|\mJ_{\mathrm{LN}}\|_{\mathrm{op}}\,\|\mW_{\mathrm{lm}}\|_{\mathrm{op}}\,\|\vp_t - \vy_t\|.
\end{align}
Since $\vp_t - \vy_t$ is a difference of probability vectors, $\|\vp_t - \vy_t\| \leq \sqrt{2}$.
For a fixed trained model, both $\|\mJ_{\mathrm{LN}}\|_{\mathrm{op}}$ and
$\|\mW_{\mathrm{lm}}\|_{\mathrm{op}}$ are finite constants.  Therefore:
\begin{align}
  \|\mG_t\|
  \leq \sqrt{2}\,\|\mJ_{\mathrm{LN}}\|_{\mathrm{op}}\,\|\mW_{\mathrm{lm}}\|_{\mathrm{op}}
  =: B_G.
\end{align}

\section{Justification for Score-function Identity Assumption}
\label{appdix:score-func-assumption}
At the optimum $p_\vtheta = p_{\mathrm{data}}$, the
cross-entropy loss with respect to the attention output $\vo_t$ satisfies, for any fixed context $x_{1:t}$:
\begin{align}
  &\mathbb{E}_{y_t}\!\left[
    \frac{\partial}{\partial \vo_t}
    \bigl(-\log p_\theta(y_t \mid x_{1:t})\bigr)
  \right]
  \\
  &= -\frac{\partial}{\partial \vo_t}\sum_{y_t}
    p(y_t \mid x_{1:t})\log p_\vtheta(y_t \mid x_{1:t})
    \bigg|_{p_\theta = p_{\mathrm{data}}}
  \\
  &= 0,
\end{align}
since the cross-entropy gradient vanishes at its own minimum.  Applying the law of total expectation over $x_{1:t}$ then yields $\mathbb{E}[\mG_t] = 0$.  More precisely, because the gradient is computed at a given context $x_{1:t}$ and then averaged
over the target $y_t$ drawn from the conditional, the identity holds conditionally:
\begin{align}
  \mathbb{E}\!\left[\mG_t \mid x_{1:t}\right] = 0.
\end{align}

\section{Preliminary Definition and Lemma}
\label{appdix:preliminary-defs}
We list the following necessary definitions and lemmas for the proof. 
\begin{definition}[$L^2(\pi)$ Space] For a Markov chain with finite state space $\mathcal{Z}$ and stationary distribution $\pi$,
define the weighted function space:
\begin{equation}
    L^2(\pi) \;=\; \bigl\{\,f : \mathcal{Z} \to \mathbb{R}\bigr\}
\end{equation}
with inner product
\begin{align}
    \langle f,g\rangle_\pi \;&:=\; \sum_{z\in\mathcal{Z}} \pi(z)\,f(z)\,g(z) \;\\&=\; \mathbb{E}_\pi[f(Z)\,g(Z)]
\end{align}
and norm $\|f\|_{L^2(\pi)} := \sqrt{\langle f,f\rangle_\pi}$.
\end{definition}

In addition, mean-zero means $\langle f, 1\rangle_{\pi} = \mathbb{E}_{\pi}[f(Z)] = 0$, i.e., the function has zero expectation under $\pi$.

\begin{definition}[Operator $P$ acting on functions]
\label{def:markov-operator}
We define the Markov operator $P$ acting on functions:
\begin{align}
    (Pf)(z) &:= \sum_{z'} P(z,z')\,f(z')
    \\ &= \mathbb{E}[f(Z_{t+1}) \mid Z_t = z]
\end{align}
\end{definition}

\begin{lemma}
For a reversible chain, detailed balance holds: $\pi(z) P(z,z') = \pi(z') P(z',z)$, which makes $P$  symmetric in $L^2(\pi)$.
\end{lemma}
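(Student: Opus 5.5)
The plan is to check symmetry directly against the inner product $\langle\cdot,\cdot\rangle_\pi$ from the definition of $L^2(\pi)$. Self-adjointness of $P$ on $L^2(\pi)$ means
\begin{equation*}
\langle Pf, g\rangle_\pi = \langle f, Pg\rangle_\pi \quad \text{for all } f,g\in L^2(\pi).
\end{equation*}
Since $\mathcal{Z}$ is finite, both sides are finite double sums. The identity will therefore follow from a single rearrangement that uses detailed balance.

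\textbf{Key steps.}

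First, expand the left-hand side with Definition~\ref{def:markov-operator}:
\begin{equation*}
\langle Pf,g\rangle_\pi=\sum_{z}\pi(z)\Bigl(\sum_{z'}P(z,z')f(z')\Bigr)g(z)=\sum_{z,z'}\pi(z)P(z,z')\,f(z')\,g(z).
\end{equation*}

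Second, apply detailed balance (Assumption~\ref{asm:cot_markov}, item 4) termwise, replacing $\pi(z)P(z,z')$ by $\pi(z')P(z',z)$. This gives
\begin{equation*}
\sum_{z,z'}\pi(z')P(z',z)\,f(z')\,g(z).
\end{equation*}

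Third, exchange the order of summation, which is legitimate because the sums are finite, and regroup:
\begin{equation*}
\sum_{z'}\pi(z')f(z')\sum_{z}P(z',z)g(z)=\sum_{z'}\pi(z')f(z')(Pg)(z')=\langle f,Pg\rangle_\pi .
\end{equation*}

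Equivalently, in matrix form, let $D=\mathrm{diag}(\pi)$. Detailed balance says $DP=(DP)^\top=P^\top D$. Hence $D^{1/2}PD^{-1/2}$ is symmetric, which uses $\pi_{\min}>0$. This conjugation is the form the later spectral argument for Lemma~\ref{lemma:spectral-correclation-decay} will actually need.

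\textbf{Main obstacle.} The computation itself is routine; the only step needing care is the index bookkeeping in the relabeling. Two points are worth stating explicitly.
\begin{itemize}
\item "Symmetric in $L^2(\pi)$" means self-adjoint with respect to the weighted inner product. It does not mean $P=P^\top$ as a matrix, which is generally false.
\item As a by-product, self-adjointness gives a real spectrum and a $\pi$-orthonormal eigenbasis. Combined with $P\mathbf{1}=\mathbf{1}$, it also shows that $P$ preserves the mean-zero subspace $\{f:\langle f,\mathbf 1\rangle_\pi=0\}$: indeed $\langle Pf,\mathbf 1\rangle_\pi=\langle f,P\mathbf 1\rangle_\pi=\langle f,\mathbf 1\rangle_\pi$.
\end{itemize}
That invariance is what allows restriction to mean-zero $f$ in Lemma~\ref{lemma:spectral-correclation-decay}.
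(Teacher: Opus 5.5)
Your proof is correct and follows the paper's argument step for step: you expand $\langle Pf,g\rangle_\pi$ as a double sum, substitute detailed balance termwise, swap the finite sums, and regroup into $\langle f,Pg\rangle_\pi$. The matrix form via $D^{1/2}PD^{-1/2}$ and the remark that $P$ preserves the mean-zero subspace are both correct additions that the paper does not spell out.
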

\begin{proof}
    According to the definition of Markov operator $P$ in \ref{def:markov-operator}, we have:
    \begin{align}
        \langle Pf,\,g\rangle_\pi &= \sum_z \pi(z)(Pf)(z)\,g(z) \\
        &= \sum_{z,z'} \pi(z)\,P(z,z')\,f(z')\,g(z)\\
        &= \sum_{z,z'} \pi(z')\,P(z',z)\,f(z')\,g(z)\\
        &= \sum_{z'} \pi(z')\,f(z')\sum_z P(z',z)\,g(z)\\
        &= \langle f,\,Pg\rangle_\pi
    \end{align}
\end{proof}
Since $|\gZ|=K<\infty$, the finite-dimensional spectral theorem for
self-adjoint operators yields a real orthonormal eigenbasis
$\{\varphi_k\}_{k=1}^{K}$ of $\Ltwopi$ with eigenvalues
$1=\lambda_1\ge|\lambda_2|\ge\cdots\ge|\lambda_K|$, and spectral gap $1 - \rho$ where $\rho := \max_{k \ge 2}|\lambda_k|$:
\begin{equation}
    P\varphi_k=\lambda_k\varphi_k,\langle \varphi_k,\varphi_j\rangle_\pi=\delta_{kj}
\end{equation}
Further, for mean-zero $f$:
\begin{equation}
    \langle f, \varphi_1 \rangle_\pi = \langle f, \mathbf{1}\rangle_\pi = \mathbb{E}_\pi[f(Z)] = 0
\end{equation}
which means $f$ has no component along $\psi_1$.  Its expansion uses only $k \ge 2$:
\begin{equation}
    f = \sum_{k \ge 2} a_k \varphi_k, \qquad a_k = \langle f,\varphi_k\rangle_\pi 
\end{equation}

\section{Proof for Spectral Correlation Decay Lemma}
\label{appdix:spectral-correlation-decay-proof}
\begin{proof}
At stationarity ($z_s \sim \pi$), the Tower Property tells:
    \begin{equation}
        \mathbb{E}[f(z_t)\,g(z_s)]
\;=\; \mathbb{E}\bigl[\mathbb{E}[f(z_t)\mid z_s]\cdot g(z_s)\bigr]
    \end{equation}
    Note that the inner expectation is the $\tau$-step prediction: for a fixed starting state $z_s = z$,
the expected value of $f(z_t)$ after $\tau = t-s$ steps is exactly $(P^{\tau} f)(z)$:
\begin{equation}
    (P^{\tau} f)(z) = \mathbb{E}[f(z_t) \mid z_s = z]
\end{equation}
Therefore:
\begin{align}
    \mathbb{E}[f(z_t)\,g(z_s)]
& = \mathbb{E}[(P^{\tau} f)(z_s)\,g(z_s)]\\
&= \sum_z \pi(z)\,(P^{\tau} f)(z)\,g(z)
\\ &= \langle P^{\tau} f,\,g\rangle_\pi
\end{align}

Then we expand $f,g$ by the orthonormal eigenbasis:
\begin{equation}
    f = \sum_{k \ge 2} a_k \varphi_k, g = \sum_{k \ge 2} b_k \varphi_k,
\end{equation}

Using the orthonormality $\langle \varphi_k,\varphi_j\rangle_\pi=\delta_{kj}$, we further simplify $\langle P^{\tau} f,\,g\rangle_\pi$:
\begin{align}
    \langle P^{\tau} f,\,g\rangle_\pi &= \left\langle \sum_{k \ge 2} a_k\lambda_k^{\tau} \varphi_k,\; \sum_{j \ge 2} b_j\varphi_j\right\rangle_\pi \\
    &=\sum_{k \ge 2} a_k\,\lambda_k^{\tau}\,b_k
\end{align}

Then we bound it:
\begin{align}
    \bigl|\langle P^{\tau} f, g\rangle_\pi\bigr| &= \left|\sum_{k \ge 2} a_k\lambda_k^{\tau} b_k\right| \\
    &\le \sum_{k \ge 2}|a_k|\,|\lambda_k|^{\tau}\,|b_k|\\
&\le \rho^{\tau} \sum_{k \ge 2} |a_k|\,|b_k|
\end{align}

Applying the Cauchy-Schwarz Inequality:
\begin{align}
    \sum_{k \ge 2} |a_k|\,|b_k| &\leq  \sqrt{\sum_{k \ge 2}|a_k|^2}\sqrt{\sum_{k \ge 2}|b_k|^2}\\
    &=||f||_{L^2(\pi)}||g||_{L^2(\pi)}
\end{align}

Now combining all together:
\begin{align}
    \bigl|\mathbb{E}[f(z_t)\,g(z_s)]\bigr| &\leq \rho^{\tau} ||f||_{L^2(\pi)}||g||_{L^2(\pi)}\\
    &= \rho^{\,t-s}\,\|f\|_{L^2(\pi)}\,\|g\|_{L^2(\pi)}
\end{align}
\end{proof}

\section{Proof of the Gradient Locality Theorem}
\label{appdix:proof-gradient-locality}
\begin{proof}
For clarity, we drop the layer/head superscripts in the following derivation. First, we bound the gradient:
\begin{align}
    \left|\frac{\partial\mathcal{L}}{\partial e_{ts}}\right| \;\le\; |\mG_t\cdot \vv_s| + |\mG_t\cdot\bar{\vv}_t|
\end{align}

Then we consider how to bound $\mathbb{E}[|\mG_t\cdot \vv_s|]$. Applying the Tower Property:
\begin{align}
   \mathbb{E}[\mG_t\cdot \vv_s]&= \mathbb{E}[\mathbb{E}[\mG_t|z_t]\cdot \mathbb{E}[\vv_s|z_s]]\\
   &=\mathbb{E}[\phi(z_t)\cdot\psi(z_s)]
\end{align}
where we define:
\begin{align}
    \phi(z)=\mathbb{E}[\mG_t|z_t=z], \psi(z)=\mathbb{E}[\vv_s|z_s=z]
\end{align}

According to Assumption \ref{asm:score}:
\begin{align}
    \mathbb{E}_{\pi}[\phi(z)] &= \sum_z \pi(z)\phi(z) \\
    &= \sum_zP(z_t=z)\phi(z)\\
    &=\mathbb{E}[\phi(z_t)] = \mathbb{E}[\mathbb{E}[\mG_t|z_t=z]]\\
    &=\mathbb{E}[\mG_t] =\mathbb{E}[[\mG_t \big|x_{1:t}]] = \vzero
\end{align}
Similarly, centering $\psi$ does not change the covariance (since $\mathbb{E}[\phi(z_t)] = 0$).  Both $\phi$ and $\psi$ are mean-zero functions
on $\mathcal{Z}$.

Then applying Lemma \ref{lemma:spectral-correclation-decay}:
\begin{align}
    \bigl|\mathbb{E}[\mG_t\cdot \vv_s]\bigr| &= \bigl|\mathbb{E}[\phi(z_t)\cdot\psi(z_s)]\bigr|\\
    &\le \rho^{\tau} ||\phi||_{L^2(\pi)}||\psi||_{L^2(\pi)}\\
    &\le B_GB_V\rho^{\tau}
\end{align}
Applying the same bound to the $\bar{\vv}_t$ term, which satisfies $\|{-}\bar{\vv}_t\| \le B_V$
as a convex combination:
\begin{align}
    g(\tau) = \mathbb{E}\!\left[\left|\frac{\partial\mathcal{L}}{\partial e_{ts}}\right|\right]
\;\le\; 2B_G B_V\,\rho^{\tau} = C'\,e^{-\tau/W},
\end{align}
where $C' = 2B_G B_V$ and $W = \frac{-1}{\log\rho}$.
\end{proof}

\section{Proof for Token Autocorrelation Decay}
\label{appdix:token-autocorrelation-proof}
Define the centred indicator feature $f_v(x) := \mathbf{1}[x=v] - \pi(v)$ and the
$\pi$-weighted autocorrelation at lag $\tau$:
\begin{align}
    &\bar\rho(\tau):= \sum_v \pi(v)\frac{\mathrm{Cov}(f_v(x_t),\,f_v(x_{t+\tau}))}{\mathrm{Var}(f_v(x_t))}
\end{align}
For the numerator $\mathrm{Cov}(f_v(x_t),\,f_v(x_{t+\tau}))$, we expand it as:
\begin{equation}
    \mathbb{E}[f_v(x_t)\,f_v(x_{t+\tau})] -\mathbb{E}[f_v(x_t)]\mathbb{E}[f_v(x_{t+\tau})]
\end{equation}

Considering the expectation of $f_v(x)$:
\begin{align}
    \mathbb{E}[f_v(x_t)] &= \mathbb{E}[\mathbf{1}[x_t=v] - \pi(v)]\\
    &=\pi(v)-\pi(v)=0
\end{align}
Therefore we can further simplify:
\begin{align}
    &\bar\rho(\tau)= \sum_v \pi(v)\frac{\mathbb{E}[f_v(x_t)\,f_v(x_{t+\tau})]}{\pi(v)(1-\pi(v))}
\end{align}

In the HMM, tokens are conditionally independent given their latent states:
$x_t \perp x_{t+\tau} \mid z_t,\, z_{t+\tau}$.  By the tower property:
\begin{align}
    &\mathbb{E}[f_v(x_t)\,f_v(x_{t+\tau})]\\
&= \mathbb{E}\!\bigl[\,\mathbb{E}[f_v(x)\mid z_t]\cdot\mathbb{E}[f_v(x)\mid z_{t+\tau}]\bigr]
\\ &= \mathbb{E}[\phi_v(z_t)\cdot\phi_v(z_{t+\tau})]
\end{align}
where $\phi_v(z)=\mathbb{E}[f_v(x)|z]=P(x=v|z)-\pi(v)$ is a mean-zero function of $z$:
\begin{align}
    \mathbb{E}_{\pi}[\phi_v(z)]&=\mathbb{E}_{\pi}[P(x=v|z)-\pi(v)]\\
    &=\sum_z\pi(z)P(x=v|z)-\pi(v)\\
    &=\pi(v)-\pi(v)=0
\end{align}

Since both $\phi_v(z_t)$ and $\phi_v(z_{t+\tau})$ are mean-zero, we can apply Lemma \ref{lemma:spectral-correclation-decay} directly:
\begin{align}
    \bigl|\mathbb{E}[\phi_v(z_t)\cdot\phi_v(z_{t+\tau})]\bigr|
\;\le\; \rho^{\tau}\,\|\phi_v\|^2_{L^2(\pi)}
\end{align}
where $\rho = \lambda_2$ is the second-largest eigenvalue of the latent transition
matrix $P$. Now we can bound the autocorrelation:
\begin{align}
    \bar\rho(\tau)
&= \sum_v \pi(v)\;\frac{\mathbb{E}[f_v(x_t)\,f_v(x_{t+\tau})]}{\pi(v)(1-\pi(v))} \\ &\le\; \rho^{\tau} \sum_v \frac{\|\phi_v\|^2_{L^2(\pi)}}{\pi(v)(1-\pi(v))}
\;= C\cdot\rho^{\tau}
\end{align}

\noindent\textbf{Conclusion.} Observing $\bar\rho(\tau) \sim C\,\hat\rho^{\tau}$ with $\hat\rho < 1$ in the corpus confirms that the latent chain has spectral gap
$1 - \hat\rho > 0$, which is exactly Lemma \ref{lemma:spectral-correclation-decay}'s hypothesis. 

\section{Discussion of Attention Gradient Decay in Pre-training Stage}
\label{appdix:discussion-gen-decay}
  Theorem \ref{thm:gradient_locality} rests on Lemma \ref{lemma:spectral-correclation-decay}, which holds for any ergodic Markov chain with a spectral gap: a property of the data-generating process, not specific to CoT structure, which is also observed in general pre-training data corpus.

  Consequently, the gradient locality bound $g(\tau) \le C' e^{-\tau/W}$ applies to any training corpus, including general pre-training data. The severity of routing collapse is governed by the mismatch $\Delta W = W_\text{corr} - W_\text{grad}$: how far the data demands the model to attend vs. how far the gradient actually reinforces it. For general pre-training corpora, this mismatch is smaller than for CoT, so routing degrades more slowly but by the same mechanism. 
  
  For hybrid architectures, however, even the slower pre-training erosion meaningfully reduces available recall capacity before fine-tuning begins, leaving the model closer to the threshold.  The hybrid architecture's vulnerability therefore originates during pre-training and is further amplified by CoT fine-tuning.

\section{Proof for the Routing-Extraction Gradient Decoupling Theorem}
\label{appdix:proof-re-gradient-decouple}
\begin{proof}
From \autoref{eq:wq_grad}, we have:
\begin{align}
    \nabla_{\mW^{Q,(h)}}\mathcal{L}
= \frac{1}{\sqrt{d_h}}\sum_{t,s} \frac{\partial\mathcal{L}}{\partial e_{ts}^{(h)}}\, \vk_s^{(h')}\, \vh_t^T
\end{align}
where $h'$ is the corresponding KV head. Then considering its Frobenius norm:
\begin{align}
    \mathbb{E}\!\left[\left\|\frac{\partial \mathcal{L}}{\partial e_{ts}}\, \vk_s\, \vh_t^T\right\|_F\right]
&= \mathbb{E}\!\left[\left|\frac{\partial \mathcal{L}}{\partial e_{ts}}\right|\right]\cdot\|\vk_s\|\cdot\|\vh_t\| \\
&\le\; C' B_K B_h\, \rho^{\tau}
\\ &=C_R\cdot \rho^{\tau}
\end{align}
via Theorem \ref{thm:gradient_locality} and bounded norms $\|\vk_s\| \le B_K$, $\|\vh_t\| \le B_h$.

For the Exctraction Parameters, the value vector $\vv_s = \mW_V \vh_s$ enters the loss through $\vo_t = \sum_s A_{ts} \vv_s$. Accumulating all downstream dependencies:
\begin{align}
    \frac{\partial \mathcal{L}}{\partial \vv_s}
= \sum_{t \ge s} A_{ts}\, \mG_t
\end{align}
Now we assume that $A_{ss} \ge \delta_A > 0$ for all $s$ and $\mathbb{E}[\|\mG_s\|] \ge c_G > 0$ for all $s$:
\begin{align}
    \mathbb{E}\!\left[\left\|\frac{\partial \mathcal{L}}{\partial \vv_s}\right\|\right]\ge\mathbb{E}[A_{ss}\|\mG_s\|] \ge\delta_A c_G > 0
\end{align}
\end{proof}

\section{Training Data and Configuration Details}
\label{appdix:training-details}
For HypeNet, we conduct both the pre-training and SFT.

\noindent\textbf{Pre-training Data.} Since we mainly focus on the long-context recall and math reasoning ability, during the pre-training stage, we mix the FineWeb-Edu \citep{lozhkov2024fineweb-edu} and UltraData-Math \citep{ultradata-math}. Following observations from recent pretraining studies that maintain web corpora as the dominant source while increasing structured reasoning data \citep{allal2025smollm2smolgoesbig}, we construct a mixture with $80\%$ general web data and $20\%$ math-focused data. Prior evidence indicates that web-scale corpora preserve broad linguistic and factual competence, while specialized mathematical corpora improve reasoning and STEM performance.

\noindent\textbf{Pre-training Config.} We adopt the 3-stage training for HypeNet, keeping the same setting in \citet{chen2026hybridlinearattentionright}, shown in \autoref{tab:training-config-2b}, \autoref{tab:training-config-5b}, and \autoref{tab:training-config-9b}.

\begin{table*}[!t]
\centering
\caption{Pre-Training Configs for HypeNet-2B} 
\label{tab:training-config-2b}
\resizebox{\linewidth}{!}{%
\begin{tabular}{l|cccccc}
\toprule
\textbf{\textsc{Stage}}                  & \textbf{\textsc{Tokens}}             & \textbf{\textsc{LR}} & \textbf{\textsc{LR Scheduler}}            & \textbf{\textsc{Context Len.}}           & \textbf{\textsc{Batch}} & \textbf{\textsc{Training Steps}}                \\ \midrule
1 & 320M & 1e-3 $\rightarrow$ 1e-5 & Cosine & 512 & 32 & 20,000 \\
2 & 1B & 1e-4 $\rightarrow$ 1e-5 & Cosine & 512 & 96 & 20,000 \\
3 & 1B & 1e-5 & Constant & 16,384 & 128 & 500
\\ \bottomrule
\end{tabular}
}
\end{table*}

\begin{table*}[!t]
\centering
\caption{Pre-Training Configs for HypeNet-5B} 
\label{tab:training-config-5b}
\resizebox{\linewidth}{!}{%
\begin{tabular}{l|cccccc}
\toprule
\textbf{\textsc{Stage}}                  & \textbf{\textsc{Tokens}}             & \textbf{\textsc{LR}} & \textbf{\textsc{LR Scheduler}}            & \textbf{\textsc{Context Len.}}           & \textbf{\textsc{Batch}} & \textbf{\textsc{Training Steps}}                \\ \midrule
1 & 320M & 1e-3 $\rightarrow$ 1e-5 & Cosine & 512 & 32 & 20,000 \\
2 & 1B & 5e-5 $\rightarrow$ 1e-5 & Cosine & 512 & 96 & 20,000 \\
3 & 1B & 1e-5 & Constant & 16,384 & 128 & 500
\\ \bottomrule
\end{tabular}
}
\end{table*}

\begin{table*}[!t]
\centering
\caption{Pre-Training Configs for HypeNet-9B} 
\label{tab:training-config-9b}
\resizebox{\linewidth}{!}{%
\begin{tabular}{l|cccccc}
\toprule
\textbf{\textsc{Stage}}                  & \textbf{\textsc{Tokens}}             & \textbf{\textsc{LR}} & \textbf{\textsc{LR Scheduler}}            & \textbf{\textsc{Context Len.}}           & \textbf{\textsc{Batch}} & \textbf{\textsc{Training Steps}}                \\ \midrule
1 & 320M & 1e-3 $\rightarrow$ 1e-5 & Cosine & 512 & 32 & 20,000 \\
2 & 1B & 3e-5 $\rightarrow$ 1e-5 & Cosine & 512 & 96 & 20,000 \\
3 & 1B & 1e-5 & Constant & 16,384 & 128 & 500
\\ \bottomrule
\end{tabular}
}
\end{table*}

\begin{table}[!t]
\centering
\caption{SFT Configs for 2B scale model} 
\label{tab:sft-config-2b}
\resizebox{0.7\linewidth}{!}{%
\begin{tabular}{ccccccc}
\toprule
\textbf{\textsc{LR}} & \textbf{\textsc{LR Scheduler}}            & \textbf{\textsc{Context Len.}}           & \textbf{\textsc{Batch}} & \textbf{\textsc{Training Steps}}                \\ \midrule
1e-5 & Constant & 16,384 & 128 & 100
\\ \bottomrule
\end{tabular}
}
\end{table}

\begin{table}[!t]
\centering
\caption{SFT Configs for 5B, 9B scale model} 
\label{tab:sft-config-5b}
\resizebox{0.7\linewidth}{!}{%
\begin{tabular}{ccccccc}
\toprule
\textbf{\textsc{LR}} & \textbf{\textsc{LR Scheduler}}            & \textbf{\textsc{Context Len.}}           & \textbf{\textsc{Batch}} & \textbf{\textsc{Training Steps}}                \\ \midrule
1e-5 $\rightarrow$ 1e-6 & Cosine & 16,384 & 128 & 100
\\ \bottomrule
\end{tabular}
}
\end{table}

\noindent\textbf{SFT Data.} To further improve the model's math reasoning performance, we conduct CoT-SFT with MiroMind-M1 dataset \citep{li2025miromindm1opensourceadvancementmathematical}, which is collected from OpenR1 \citep{openr1}, Open-thoughts \citep{guha2025openthoughts}, Light-R1 \citep{wen2025light}, and Synthetic-1 \citep{2025synthetic1}.
\noindent\textbf{SFT Config.} The detailed configurations are shown in 
\autoref{tab:sft-config-2b} and \autoref{tab:sft-config-5b}.

\section{Evaluation Details}
\label{appdix:eval-details}
For evaluating NIAH, we apply LM Evaluation Harness\footnote{\url{https://github.com/EleutherAI/lm-evaluation-harness}} \citep{eval-harness} for official test. For evaluating MATH500 and GSM8K, we report average \texttt{pass@1} over 16 independent generations and \texttt{Maj@16} (i.e., majority vote as the final prediction) per problem. In detail, for MATH500, the generation length is 8,192; for GSM8K, we set it as 2,048, since it is easier. For LiveCodeBench, we report the average \texttt{pass@1} over 8 independent generations with 16,384 response length. To measure the model's performance after fine-tuning on Tulu3, we report the Prompt-level strict accuracy on IFEval task \citep{zhou2023instructionfollowingevaluationlargelanguage}.

\section{Analysis on Pure Softmax-Attention Model}
Our main experiments concentrate on the hybrid models, and in this section, we explore whether SFT can degrade the pure softmax-attention models as well. We include Qwen2.5-3B, Qwen2.5-7B \citep{qwen2025qwen25technicalreport} and Misrtral-7B-Instruct-v0.3 \citep{jiang2023mistral7b} for investigation. We include the details on how we identify the top layers for retrieval in 
\autoref{appdix:iden-retrieve-layer}.
\begin{table}[!t]
\centering
\caption{NIAH on pure softmax-attention models.} 
\label{tab:ablation-full-attn}
\resizebox{0.7\linewidth}{!}{%
\begin{tabular}{ll|llll}
\toprule
\textbf{\textsc{Model}}   & \textbf{\textsc{Method}} & \multicolumn{2}{c}{\textbf{\textsc{NIAH-Single-2}}}    & \multicolumn{2}{c}{\textbf{\textsc{NIAH-Single-3}}}  \\
& &  \textbf{32K} & \textbf{64K}  & \textbf{32K} & \textbf{64K} 
\\ \midrule
Qwen2.5-7B & Pre-train & 100.0 & 95.8 & 99.8 & 98.6\\
& +SFT & 100.0 & 92.6 & 99.8 & 99.4\\ \midrule

Qwen2.5-3B & Pre-train & 100.0 & 94.6 & 100.0 & 99.6\\
& +SFT & 100.0 & 93.8 & 100.0 & 90.8\\
& +\textbf{\method} & 100.0 & 93.8 & 100.0 & 93.4\\ \midrule

Mistral-7B & Pre-train & 99.8 & 90.2 & 99.8 & 81.0 \\
& +SFT & 99.6 & 59.6 & 65.8 & 4.80 \\
& +\textbf{\method} & 99.8 & 62.0 & 70.0 & 5.40 

\\\bottomrule
\end{tabular}
}
\end{table}

\noindent\textbf{Analysis.}  \autoref{tab:ablation-full-attn} shows that for Mistral-7B, it exhibits strong sensitivity to SFT at longer contexts. SFT causes severe degradation on NIAH-Single-3 at 64K, dropping from $81.0$ to $4.80$, suggesting a breakdown of long-range retrieval. \method recovers this degradation limitedly, indicating weaker long-context representations in the pre-trained model.
\begin{figure}[!ht]
    \centering
    \includegraphics[width=0.55\linewidth]{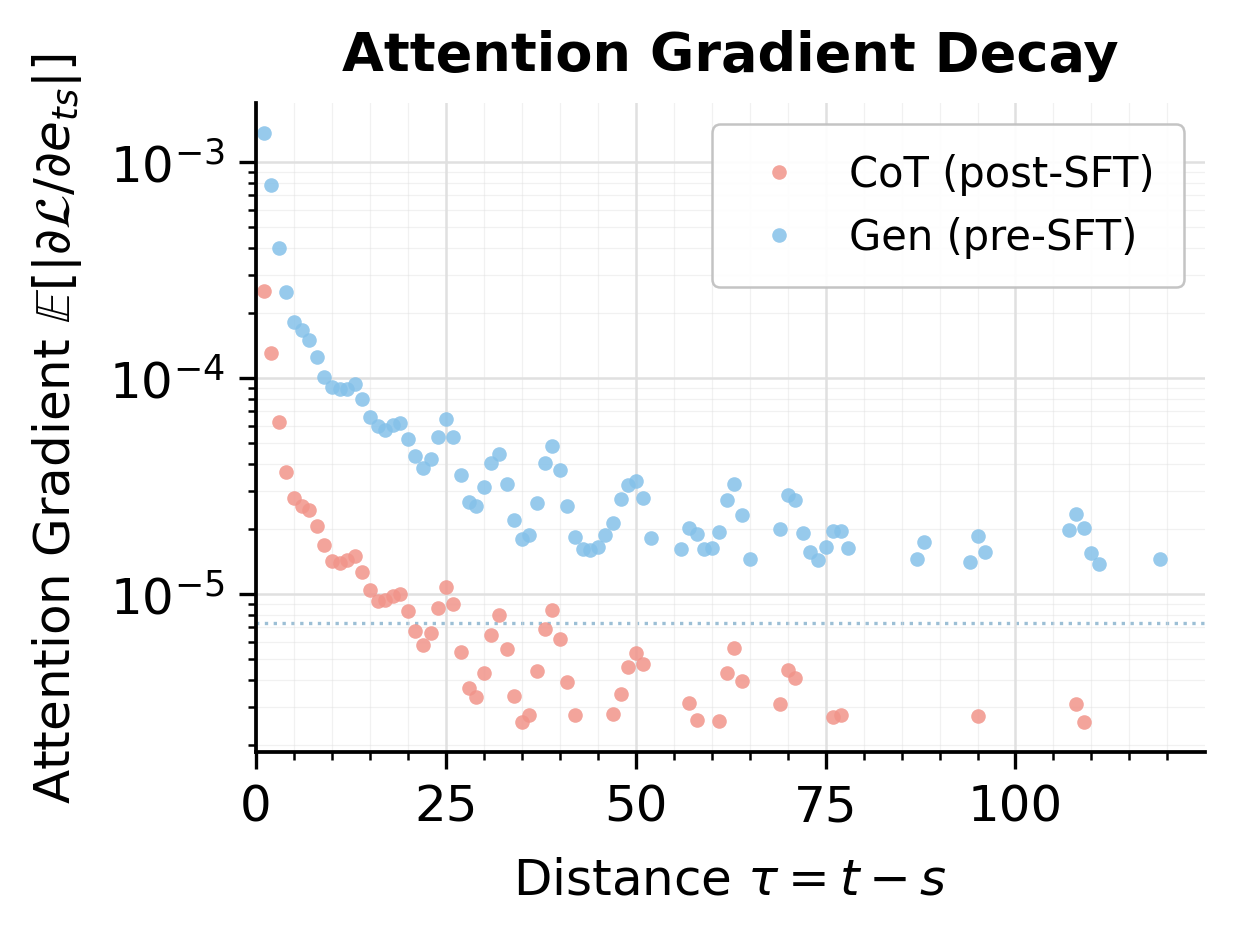}
    \vspace{-2mm}
    \caption{Attention gradient decay on Qwen2.5-3B. We observe a different pattern from the exponential decay of HypeNet.}
    \label{fig:qwen-attention-decay}
\end{figure}

For Qwen2.5-7B and Qwen2.5-3B, long-context performance is highly stable across NIAH-Single-2 and Single-3. SFT induces only minor fluctuations (e.g., Qwen2.5-7B drops from $95.8$ to $92.6$ at 64K), indicating that these models already learn robust long-context retrieval during pre-training. Accordingly, \method yields only marginal gains, consistent with a near-saturation regime where little capability is lost. To investigate the underlying mechanism, we measure the attention gradient decay on Qwen2.5-3B, shown in \autoref{fig:qwen-attention-decay}. Unlike HypeNet-2B, whose full-attention layers exhibit a well-fitted exponential decay, it has a clear near-flat tail. We hypothesize that long-context recall in pure softmax-attention models is maintained structurally, via the abundance of full-attention layers.

\section{Method of Identifying Retrieval Layers in Pure Softmax-Attention Models}
\label{appdix:iden-retrieve-layer}
To identify which softmax-attention layers are primarily responsible for long-range retrieval, we conduct a leave-one-out ablation study over the set ${L}_{\mathrm{attn}}$. For each layer $\ell \in {L}_{\mathrm{attn}}$, we construct an ablated model $\mathcal{M}^{(-\ell)}$ by setting $\mW_Q^\ell = \mW_K^\ell = \mathbf{0}$. Each ablated model is evaluated on several retrieval benchmarks (Ruler-QA-SQuAD \citep{hsieh2024ruler}, FDA \citep{arora2023language}, SWDE \citep{lockard2019openceres}). The importance score of layer $\ell$ is defined as:
\begin{equation*}
    \delta\text{Recall}(\ell) = \mathcal{S}(\mathcal{M}) - \mathcal{S}(\mathcal{M}^{(-\ell)}),
\end{equation*}
where $\mathcal{S}(\cdot)$ denotes the average retrieval accuracy across benchmarks. A large $\delta\text{Recall}(\ell)$ indicates that layer $\ell$ contributes disproportionately to retrieval and that its routing weights are critical; a near-zero score indicates redundancy. Layers are ranked by $\delta\text{Recall}$ in descending order, and the top-$k$ layers are selected as the target set for weight restoration. In our experiments, we keep $k=L_{\mathrm{attn}}/4$ as the same as in HypeNet.

\section{Details of deriving \textsc{QK-Pro}}
\label{appdix:graft-pro}
The bilinear constraint $\mW_Q^{\rm new} \mW_{K}^{\rm new \top} = \mR_{\rm pre}$ has no closed-form joint solution, so we linearise by fixing one factor to its pre-SFT value, making the remaining problem a standard constrained least-squares system solvable via Lagrange multipliers.

We fix $\mW_K^{\rm new} = \mW_K^{\rm pre}$, find minimum-deviation $\mW_Q^{\rm new}$:
\begin{align}
    \min_{\mW_Q^{\rm new}} \left\|\mW_Q^{\rm new} - \mW_Q^{\rm post}\right\|_F  
    \\\text{s.t.} \quad \mW_Q^{\rm new} \mW_{K}^{\rm new \top} = \mR_{\rm pre}
\end{align}
Then applying the Lagrange multipliers:
{\small
\begin{align}
    \mathcal{L}=\left\|\mX - \mW_Q^{\rm post}\right\|_F+ \mathrm{tr}(\Lambda^{\top}(\mX \mW_{K}^{\rm new \top} -\mR_{\rm pre}))
\end{align}
}
Taking the derivative $\partial\mathcal{L}/\partial \mX = 0$, and we get:
\begin{equation}
\begin{adjustbox}{max width=0.99\linewidth}
$
\begin{aligned}
\mW_Q^{\rm new} = \mW_Q^{\rm post} + \bigl(\mR_{\rm pre} - \mW_Q^{\rm post} \mW_{K}^{\rm pre\top}\bigr)\bigl(\mW_K^{\rm pre} \mW_{K}^{\rm pre\top} + \lambda \mI\bigr)^{-1} \mW_K^{\rm pre}
\end{aligned}
$
\end{adjustbox}
\end{equation}

\end{document}